\pgfplotsset{compat=1.18}
\definecolor{codegreen}{rgb}{0,0.6,0}
\definecolor{codegray}{rgb}{0.5,0.5,0.5}
\definecolor{codepurple}{rgb}{0.58,0,0.82}
\definecolor{backcolour}{rgb}{0.95,0.95,0.92}
\lstdefinestyle{mystyle}{
    backgroundcolor=\color{backcolour},   
    commentstyle=\color{codegreen},
    keywordstyle=\color{magenta},
    numberstyle=\tiny\color{codegray},
    stringstyle=\color{codepurple},
    basicstyle=\ttfamily\footnotesize,
    breakatwhitespace=false,         
    breaklines=true,                 
    captionpos=b,                    
    keepspaces=true,                 
    numbers=left,                    
    numbersep=5pt,                  
    showspaces=false,                
    showstringspaces=false,
    showtabs=false,                  
    tabsize=2
}
\definecolor{cvprblue}{rgb}{0.21,0.49,0.74}
\newtheorem{proposition}{Proposition}
\newtheorem{theorem}[proposition]{Theorem}
\newtheorem{lemma}[proposition]{Lemma}
\DeclareMathOperator{\PoM}{\text{PoM}}
\title{\includegraphics[height=1.3\fontcharht\font`f]{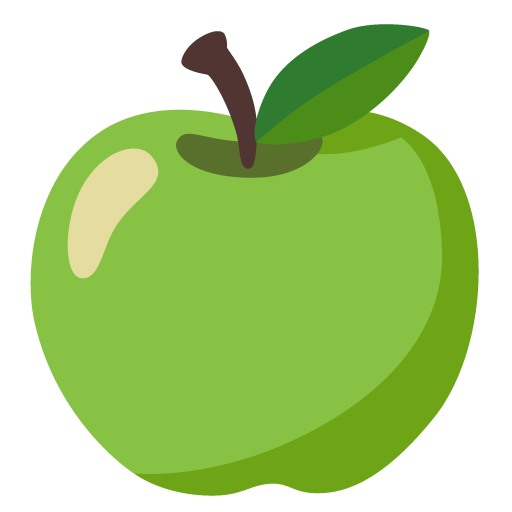}PoM: Efficient Image and Video Generation with the Polynomial Mixer}
\author{David Picard$^1$, Nicolas Dufour$^{1,2}$\\
$^1$LIGM, École Nationale des Ponts et Chaussées, IP Paris, Univ Gustave Eiffel, CNRS, France\\
$^2$LIX, École Polytechnique, IP Paris, CNRS, France\\
{\tt\small \{david.picard,nicolas.dufour\}@enpc.fr}
}
\begin{document}

\maketitle
\begin{abstract}
Diffusion models based on Multi-Head Attention (MHA) have become ubiquitous to generate high quality images and videos. However, encoding an image or a video as a sequence of patches results in costly attention patterns, as the requirements both in terms of memory and compute grow quadratically. To alleviate this problem, we propose a drop-in replacement for MHA called the Polynomial Mixer (PoM) that has the benefit of encoding the entire sequence into an explicit state. PoM has a linear complexity with respect to the number of tokens. This explicit state also allows us to generate frames in a sequential fashion, minimizing memory and compute requirement, while still being able to train in parallel. We show the Polynomial Mixer is a universal sequence-to-sequence approximator, just like regular MHA. We adapt several Diffusion Transformers (DiT) for generating images and videos with PoM replacing MHA, and we obtain high quality samples while using less computational resources. The code is available at \url{https://github.com/davidpicard/HoMM}.

\end{abstract}

\section{Introduction}
\begin{figure}
    \centering
    \resizebox{\columnwidth}{!}{
\begin{tikzpicture}
    \begin{axis}[
        width=\columnwidth,
        height=0.8\columnwidth,
        xlabel={Image resolution},
        ylabel={Time in second/image},
        ymode=log,
        xtick={256, 1024, 2048, 4096},
        grid=both,
        grid style={line width=.1pt, draw=gray!10},
        major grid style={line width=.2pt, draw=gray!30},
        xmin=0, xmax=4352,
        ymin=0.01, ymax=6,
        legend style={at={(0.98,0.02)}, anchor=south east, font=\footnotesize},
        legend cell align=left,
        every axis plot/.append style={very thick}
    ]

    \addplot[color=blue!70, mark=o] coordinates {
        (256, 0.038862082336563616)
        (384, 0.03905078684678301)
        (512, 0.04221228306181729)
        (768, 0.05795255176257342)
        (1024, 0.09442358191823587)
        (1536, 0.17702449211617932)
        (2048, 0.29645248390734197)
        (3072, 0.6015695175528526)
        (4096, 1.0516053920192645)
    };
    \addlegendentry{PoM forward+backward}

    \addplot[color=blue!70, mark=o, dashed, mark options={solid}] coordinates {
        (256, 0.015069471425376832)
        (384, 0.015871938960626723)
        (512, 0.0189380434085615)
        (768, 0.02909953865222633)
        (1024, 0.04561432354385033)
        (1536, 0.080849758409895)
        (2048, 0.13602926889434458)
        (3072, 0.2696375051792711)
        (4096, 0.47321898558177056)

    };
    \addlegendentry{PoM forward}

    \addplot[color=orange!80, mark=square] coordinates {
        (256, 0.03005698923021555)
        (384, 0.032288018770050254)
        (512, 0.03444906664080918)
        (768, 0.05218656954821199)
        (1024, 0.08344621217343956)
        (1536, 0.21130420179106296)
        (2048, 0.4523333204118535)
        (3072, 1.639214531343896)
        (4096, 4.577012917282991)
    };
    \addlegendentry{MHA forward+backward}

    \addplot[color=orange!80, mark=square, dashed, mark options={solid}] coordinates {
        (256, 0.010657461867667735)
        (384, 0.012714778319932521)
        (512, 0.014707973287440836)
        (768, 0.02519212323939428)
        (1024, 0.037897837879136205)
        (1536, 0.08845603726571426)
        (2048, 0.18336747443769127)
        (3072, 0.6122184240771458)
        (4096, 1.6892871374706737)
    };
    \addlegendentry{MHA forward}
    \end{axis}
\end{tikzpicture}
    }
    \caption{\textbf{Comparison between the speed of PoM and Multi-Head Attention (MHA) in the same DiT-XL/2 architecture for different image resolutions.} We use an H100 GPU and compute the average time on 100 synthetic training batches to perform the forward or forward+backward passes. We use synthetic data to remove the influence from data loading. Training with PoM is less costly than inference with MHA at higher resolutions.}
    \label{fig:enter-label}
\end{figure}
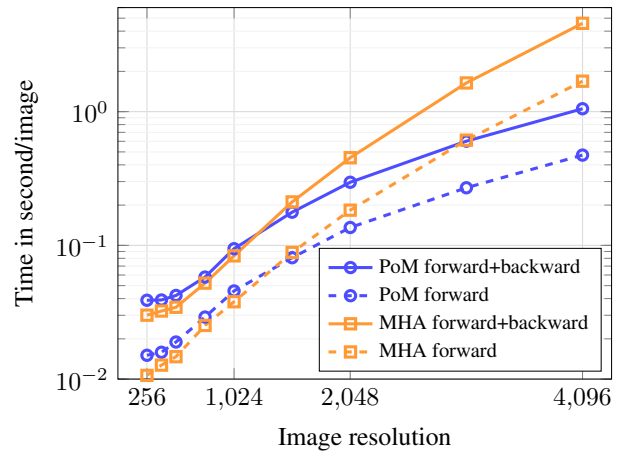

In a sudden change of pace, high quality image and video generation have evolved from a task seemingly impossible to achieve to a task almost solved by available commercial or open-source tools like Stable Diffusion 3~\cite{esser24icml}, Sora~\cite{sora} or MovieGen~\cite{moviegen}.
At the heart of this success lies the Multi-head Attention (MHA) in the transformer architecture~\cite{vaswani17nips} that has excellent scaling properties~\cite{zhai22cvpr,peebles23iccv}.
These so-called scaling laws~\cite{kaplan2020scaling} enable \emph{brute-forcing} complex problems such as image and video generation by using very large models trained on gigantic data, at the expense of an ever increasing computational cost.
The main focus of current research lies thus in scaling transformer-based approaches to larger models handling larger datasets.

The issue with transformers is that the computational cost increases quadratically with the sequence length due to the pairwise computation in MHA.
This means that generating an image at twice the spatial resolution (respectively a video at twice the resolution and double the duration) results in 4 times more patches and thus 16 times more computational cost (respectively 8 times more patches and thus 64 times more computational cost).
Attempts at having transformers with sub-quadratic complexity~\cite{child2019generating,kitaev2020reformer,wang2020linformer} introduce the additional constraint of fixing the number of tokens, which prevents generating images or videos of different sizes.
Alternatively, recurrent models such as State-Space Models (SSM)~\cite{gu21iclr,gu21nips} have been investigated for the task~\cite{yan24cvpr,teng2024dim,hu24eccv} since their complexity is linear with the sequence length~\cite{gu24colm}.
However, they introduce an arbitrary causal raster scan of the sequence that does not fit the 2D geometry of images very well.

In this paper, we enable better scaling in large generative models by introducing a new building block called the Polynomial Mixer (PoM).
PoM has a linear complexity like SSMs while still enabling all pairwise information to be processed like in MHA, obtaining effectively the best of both worlds.
From a theoretical standpoint, we prove PoM can be used as a drop-in replacement for attention.
Doing so in the popular DiT architecture~\cite{peebles23iccv,ma24eccv} results in improved scaling such that at higher resolutions, it becomes less costly to train a model with PoM than to perform inference with a model using MHA, as shown on Figure~\ref{fig:scaling_diffusion}.

To sum up, the contributions of this paper are the following:
\begin{itemize}
    \item[$\checkmark$] We introduce the Polynomial Mixer (PoM), a replacement for MHA that has a linear complexity with respect to the sequence length and without sacrificing generation quality;
    \item[$\checkmark$] We prove that models equipped with PoM are universal sequence-to-sequence approximators;
    \item[$\checkmark$] We train DiT-inspired image generative models and obtain results of similar quality while being much more compute efficient at higher resolutions;
    \item[$\checkmark$] We train video generative models leveraging PoM with a constant processing cost per frame while not sacrificing on visual quality.
\end{itemize}

Our contribution is therefore primarily fundamental: We show that it is possible to
train generative models with an alternative mecanism to MHA. We believe this direction will not only ground future research on high resolution images and very long videos generation, but also
could benefit many areas of research (\textit{e.g.}, large language models, vision-language models, etc). 

\section{Related Work}

\paragraph{Diffusion}
Diffusion models~\cite{ho20nips,nichol21icml,song21iclr} learn a neural operator that produces natural images from noise using a forward-reverse set of processes.
The forward process consists in pushing the distribution of natural images forward to a known distribution, typically Gaussian, which can be done by adding increasing level of noise to the image.
The reverse process does not have an explicit solution, but can be approximated by a neural network by regressing the local inverse of the forward process, \textit{i.e.}, solving
\begin{align}
    &\min_\theta \mathbb{E}_{t\sim \mathcal{U}(0,1)}\left[\|\varepsilon_t - f_\theta(x_t, t)\|^2\right],\\
    &\text{ s.t. } x_t = \alpha_t x_0 + \gamma_t \varepsilon_t,\, \varepsilon_t \sim \mathcal{N}(0,1).
\end{align}
Here, $\alpha_t$ and $\gamma_t$ are chosen such that $x_0$ corresponds to a natural image whereas $x_1$ corresponds to pure Gaussian noise.
A great amount of research has been put into finding better noise schedules ($\alpha_t$ and $\gamma_t$)~\cite{balaji22eDiffI,karras24cvpr,hang2024improved}, or improving the quantity that is regressed~\cite{lipman22iclr,shi24nips,liu23iclr}, keeping the general idea of learning to invert step by step the stochastic differential equation that transforms an image into noise.

For image and generation, most efforts have been poured into designing efficient architectures at the task. While the original DDPM papers~\cite{ho20nips,nichol21icml} sample images in pixel space, making it unsuitable for large resolution, the most groundbreaking improvement was introduced by Stable Diffusion~\cite{rombach22cvpr} with the addition of a variational auto-encoder (VAE) that allows the diffusion process to be performed in a lower dimensional latent space.
Stable Diffusion uses a U-Net architecture complemented by attention layers~\cite{rombach22cvpr,Si_2024_CVPR}.
To benefit more from the scaling properties of transformers~\cite{kaplan2020scaling,zhai22cvpr}, simpler approaches based solely on transformer layers has been proposed in DiT~\cite{peebles23iccv} and the subsequent flow-matching version SiT~\cite{ma24eccv}.
Most modern text-to-image generation models are now based on Transformer layers rather than the U-Net~\cite{hatamizadeh2025diffit,esser24icml,chen2024pixart,gao2024lumina}. \cite{crowson2024scalable, gu2023matryoshka}, train efficient pixel space transformers models by leveraging multiscale training and SwinAttention.
Similarly, RIN~\cite{jabri23icml, chen2023fit} also proposes an approach using attention only, albeit in a Perceiver-IO~\cite{jaegle22iclr} inspired architecture that uses cross-attention to perform most of the computation in a smaller latent space, and has been successfully extended to text-to-image~\cite{dufour24cvpr}.
In addition to architectures and sampling~\cite{Zhou_2024_CVPR,Bai_2024_CVPR,zhao2023mobilediffusion}, the importance of training is also highlighted in recent works, from resampling the training data~\cite{Gokaslan_2024_CVPR,Liu_2024_CVPR} to RL~\cite{Wallace_2024_CVPR,wei2024powerful,lee2025parrot} and model averaging~\cite{Karras_2024_CVPR}.

In video generation~\cite{villegas22iclr,ho2022video,zhao2025magdiff,singer23make,gupta2025photorealistic}, early attempts have focused on extending existing text-to-image models to benefit from their large scale pretraining~\cite{rombach23cvpr,kwon2024harivo,ge2023preserve,ho2022imagen,hong2023cogvideo,girdhar24factorizing}. 
However, the drawback of such approaches is that they re-use the VAE of existing text-to-image models which does not encode temporal information, which is thus not compressed.
As such, novel architectures using a 2D+t VAE such as CogVideoX~\cite{yang2024cogvideox}, PyramidFlow~\cite{jin2024pyramidal} can benefit from a smaller latent space leading to less computational costs.

\paragraph{Fast alternative to attention}
Since the introduction of Transformers~\cite{vaswani17nips}, many effort have been made to reduce the quadratic complexity of MHA~\cite{child2019generating,kitaev2020reformer,wang2020linformer}.
Notably, methods like Reformer~\cite{kitaev2020reformer} use fast approximate neighbors to reduce the size of the attention matrix based on the assumption that most tokens will have zero attention. 
To go further, Linformer~\cite{wang2020linformer} proposes to compute an explicit low rank projection of the keys and the values to reduce the complexity of MHA for each query from the size of the sequence $n$ to an arbitrary chosen number $k \ll n$.
The main drawback of such approach is that $n$ and $k$ are fixed, which means that the model can no longer process sequences of varying length.
With the advent of Large Language Models and their ability to process extremely long sequences~\cite{achiam2023gpt,team2023gemini,dubey2024llama}, recent efforts have been put on more efficient implementations such as Flash-Attention~\cite{dao22nips,dao2023flashattention} or KV-cache~\cite{brandon2024reducing,luohe2024keep} which seem sufficient for text.
However for visual content, the sequence length grows quadratically with the resolution, which, because MHA is also quadratic in the number of tokens, leads to quartic computational and memory complexity.

Alternatively, some attempts have been made to just remove the Multi-Head Attention, such as in Mlp-Mixer~\cite{tolstikhin21nips} and Resmlp~\cite{touvron2022resmlp} that replace MHA with simple projection on the transpose tensor (\textit{i.e.}, considering the sequence dimension as the features).
These approaches have been shown to obtain competitive results, but similarly to Linformer, they imply a fix sequence length since this length is now an intrinsic dimension of the projection in the transpose direction.
More recently, State-Space Models (SSM)~\cite{gu21iclr,gu21nips} have become the focus of recent work especially in language modeling~\cite{zuo2024falcon,dao24icml,lieber2024jamba,glorioso2024zamba}.
SSM are recurrent models, which is highly beneficial for language modeling because of the causal property of text. In that case, the complexity to generate the next token becomes constant.
In visual content however, there is no such natural causality pattern in the spatial dimensions.
Attempt to use such models for vision tasks have been successful~\cite{zhu2024vision,liu2024vmambavisualstatespace,pei2024efficientvmambaatrousselectivescan}, albeit at the cost of enforcing an arbitrary 1-dimensional scan order of the tokens that does not encode well the 2D nature of an image.
In image generation using diffusion~\cite{hu24eccv,yan24cvpr}, since the model has to be iterated, this results in a doubly sequential processing (space and iterations) that does not benefit from the parallel nature of processing images.
For video however, the causal aspect is natural over the time dimension, and recurrent approaches may be more efficient.

\section{Polynomial Mixer and Polymorpher}
We define a \emph{Polymorpher} block as a sequence-to-sequence function mapping $\mathbb{R}^{d\times n}$ to $\mathbb{R}^{d\times n}$, composed of two residual blocks, a \emph{Polynomial Mixer} and a feed-forward block. 

For a sequence $X\in \mathbb{R}^{d\times n}$, the Polynomial Mixer ($\PoM$) shown on Figure \ref{fig:pom} is defined as follows:
\begin{align}
    \PoM(X) &= W_o \left[\sigma(W_s X) \circ H(X)\bm{1}^\top\right],\text{ with}\\
    H(X) &= \left[h(W_1 X); \dots; \prod_{m=1}^kh(W_mX)\right]\bm{1}, 
\end{align}
where $k$ is the degree of the Polynomial Mixer, $\sigma$ is the sigmoid function, $h$ an activation function, $\circ$ and $\prod$ the element-wise (Hadamard) product, and $\bm{1}$ a vector of the appropriate dimension filled with ones. The notation $[\cdot; \cdot]$ is for vertical concatenation. The matrices $W_o\in \mathbb{R}^{d\times kD}$, $W_s\in \mathbb{R}^{kD\times d}$ and $W_1, \dots, W_k \in \mathbb{R}^{D\times d}$ are the learnable parameters of the Polynomial Mixer.

\begin{figure}
    \centering
    \resizebox{\columnwidth}{!}{
    \begin{tikzpicture}
\node[draw=white, rounded corners, minimum width=2.5cm, minimum height=0.7cm, fill=gray!20, label=above:$X$] (input) {};

\foreach \x in {0.35, 0.95, 1.55, 2.15} {
    \node[draw=gray, fill=white, rounded corners, minimum width=0.5cm, minimum height=0.5cm] at ([xshift=\x cm]input.west) {};
}

\node[draw=white, rounded corners, minimum width=2.5cm, minimum height=2.5cm, fill=orange!20, right=1.5cm of input, yshift=1.5cm, label={above:$\left[h(W_1 X); \dots; \prod_m^k h(W_m X)\right]\bm{1}$}] (topbox) {};

\foreach \x in {0.35, 0.95, 1.55, 2.15} {
    \node[draw=orange, fill=white, rounded corners, minimum width=0.5cm, minimum height=2.3cm] (rect) at ([xshift=\x cm]topbox.west) {};
    \draw[orange] ($(rect.south west)!0.33!(rect.north west)$) -- ($(rect.south east)!0.33!(rect.north east)$);
    \draw[orange] ($(rect.south west)!0.66!(rect.north west)$) -- ($(rect.south east)!0.66!(rect.north east)$);
}

\node[draw=white, rounded corners, minimum width=0.7cm, minimum height=2.5cm, fill=orange!20, right=1.5cm of topbox, yshift=0cm, label={above:$H(X)$}] (toprightbox) {};

\node[draw=orange!60, fill=white, rounded corners, minimum width=0.5cm, minimum height=2.3cm] (rect) at ([xshift=0.35 cm]toprightbox.west) {};
\draw[orange] ($(rect.south west)!0.33!(rect.north west)$) -- ($(rect.south east)!0.33!(rect.north east)$);
\draw[orange] ($(rect.south west)!0.66!(rect.north west)$) -- ($(rect.south east)!0.66!(rect.north east)$);

\node[draw=white, rounded corners, minimum width=2.5cm, minimum height=2.5cm, fill=blue!15, right=2.5cm of input, yshift=-1.5cm, label={above:$S(X) = \sigma(W_s X)$}] (bottombox) {};

\foreach \x in {0.35, 0.95, 1.55, 2.15} {
    \node[draw=blue!60, fill=white, rounded corners, minimum width=0.5cm, minimum height=2.3cm] (rect) at ([xshift=\x cm]bottombox.west) {};
}

\node[draw=white, rounded corners, minimum width=2.5cm, minimum height=2.5cm, fill=green!20, right=1cm of toprightbox, yshift=-1.5cm, label={above:$S(X) \circ H(X)\bm{1}^\top$}] (rightbox) {};

\foreach \x in {0.35, 0.95, 1.55, 2.15} {
    \node[draw=green!60, fill=white, rounded corners, minimum width=0.5cm, minimum height=2.3cm] (rect) at ([xshift=\x cm]rightbox.west) {};
}

\node[draw=white, rounded corners, minimum width=2.5cm, minimum height=0.7cm, fill=gray!20, right=1cm of rightbox, label=above:$W_o Z$] (output) {};

\foreach \x in {0.35, 0.95, 1.55, 2.15} {
    \node[draw=gray, fill=white, rounded corners, minimum width=0.5cm, minimum height=0.5cm] at ([xshift=\x cm]output.west) {};
}

\draw[->, thin, -{Stealth[length=2mm,width=1.4mm]}] (input.east) to[out=0, in=180]  (topbox.west);
\draw[->, thin, -{Stealth[length=2mm,width=1.4mm]}] (topbox.east) to[out=0, in=180] node[above, midway] {$\sum$} (toprightbox.west);
\draw[->, thin, -{Stealth[length=2mm,width=1.4mm]}] (toprightbox.east) to[out=0, in=180] (rightbox.west);
\draw[->, thin, -{Stealth[length=2mm,width=1.4mm]}] (input.east) to[out=0, in=180] (bottombox.west);
\draw[->, thin, -{Stealth[length=2mm,width=1.4mm]}] (bottombox.east) to[out=0, in=180] (rightbox.west);
\draw[->, thin, -{Stealth[length=2mm,width=1.4mm]}] (rightbox.east) to[out=0, in=180] (output.west);
\end{tikzpicture}
    }
    \caption{\textbf{Diagram for the Polynomial Mixer.} The input sequence is split into two paths. The top path expands each token using a polynomial before they are mixed (averaged)² into a single representation. The bottom path expands the tokens into gating coefficients. Both paths are recombined and projected back into the input dimension.}
    \label{fig:pom}
\end{figure}
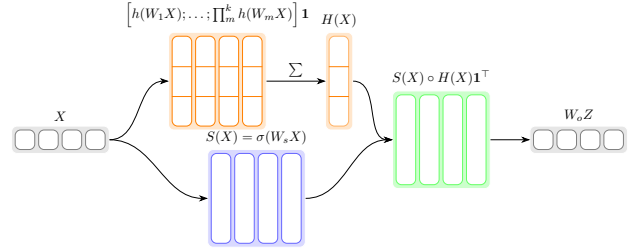

The idea of the Polynomial Mixer is to that the sequence $X\in \mathbb{R}^{d\times n}$ is uniquely summarized into the representation $H(X) \in \mathbb{R}^{kD\times 1}$. Each element in $X$ then gets to query $H(X)$ independently thanks to the map $S(W_s X) \in \mathbb{R}^{kD\times n}$. The queried information is then projected back into the original space with $W_o$.

Contrarily to MHA that computes all pairwise exchanges of information between tokens in the sequence, the Polynomial Mixer follows a state-representation ($H(X)$) approach where all information is shared in a common memory location that all tokens can access. This state-representation is defined by mixing all tokens of the sequence after they are mapped to a high dimensional space by a learned polynomial, hence the name \emph{Polynomial Mixer}, and a similar approach has been successfully used for learning image representation~\cite{jacob2019metric}. The main benefit is that the complexity of the approach is no longer quadratic but linear with the sequence length $n$.

Taking inspiration from transformers with MHA, we define a Polymorpher block $P$ as alternating residual Polynomial Mixers with feed-forward networks as follows:
\begin{align}
    \text{P}(X) = X + \PoM(X) + \text{FF}(X + \PoM(X)),
\end{align}
with $\text{FF}(X)$ being a two-layer feed-forward network.

A Polymorpher is a drop-in replacement for any Transformer-based architecture as it performs the same role of sequence-to-sequence mapping. The main difference is in its parametrization: A Transformer is configured by the number of heads and their dimension in MHA, whereas the Polymorpher is configured by its degree $k$ and the dimension $D$ of each polynomial.

\subsection{Polymorpher for causal sequences}
\label{sec:causal}

A causal sequence can easily be modeled in $\PoM$ by adding a mask $M$ that prevents summing future tokens into the blackboard. This corresponds to the following  definition
\begin{align}
    \PoM(X, M) = W_o \left[\sigma(W_s X) \circ H(X)\right], \\
    H(X) = \left[h(W_1 X); \dots; \prod_{m=1}^kh(W_mX)\right]M^\top. 
\end{align}
Now $H(X)\in \mathbb{R}^{kD\times n}$ and $M\in \{0, 1\}^{n\times n}$ is a binary matrix that defines which pairs of tokens are related. Just like for MHA, a binary matrix defines an attention pattern that can be arbitrarily chosen. 

In the special case of causal sequences, $M$ is a lower triangular matrix.
Moreover, one can express the mixing part of the Polynomial Mixer as an iterative process as follows:
\begin{align}
    H(X)_{:,i} &= \sum_{j\leq i} \left[h(W_1 X); \dots; \prod_{m=1}^kh(W_mX)\right]_{:,j},\\
    &= H(X)_{:,i-1} + \left[h(W_1 X); \dots; \prod_{m=1}^kh(W_mX)\right]_{:,i}.
\end{align}
In this formula, $H(X)_{:,i}$ is an explicit hidden state that is updated by adding the polynomial mapping of the next token.
Such a configuration enables $\mathcal{O}(1)$ inference complexity in the auto-regressive setup, a property that is shared with recurrent networks, but not transformers. Like SSMs, Polymorphers have the best of both worlds, they can train on the whole sequence in parallel and do the inference in the recursive way.

In addition, Polymorphers can handle block causal sequences. Let $M$ be a block causal matrix for some integer block size $K$:
\begin{align}
    M_{i,j} = 1 \text{ if } j \leq \lceil i / K \rceil K \text{ else } 0.
\end{align}
We can now rewrite $H$ as
\begin{align}
\nonumber H(X)_{:,i} = &H(X)_{:,\lfloor i/K\rfloor K} \\
    &+ \sum_{j = \lfloor i/K\rfloor K}^{\lceil i / K \rceil K}\left[h(W_1 X); \dots; \prod_{m=1}^kh(W_mX)\right]_{:,j}.
\end{align}
In this configuration, we can sequentially process groups of tokens at a time during inference, which reduces the memory requirement.
This is in particular practical for video sequences where it makes sense to have a causal mask in the temporal dimension that makes each frame depend on the previous ones, while keeping the ability of all the tokens (patches) of a frame to look at each others, since causality does not have much sense in the spatial dimension.

\subsection{Theoretical analysis}

We first show that $\PoM$ is equivariant, which means that permutations in the input sequence result in permuted outputs. This is a key property that made transformers popular and does not hold for other architectures like convolutions:

\begin{proposition}[Permutation equivariance]
    A Polynomial Mixer is permutation equivariant, i.e., let $X \in \mathbb{R}^{d\times n}$ be a set of vectors and $P$ a column permutation matrix, then $\PoM(XP) = \PoM(X)P$.
\end{proposition}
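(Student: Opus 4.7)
The plan is to trace the column permutation $P$ through every component of $\PoM$, relying on three elementary facts: (i) left multiplication by a fixed matrix commutes with right multiplication by $P$, so $W(XP)=(WX)P$; (ii) any entry-wise map (here $\sigma$ and $h$) commutes with column permutation, so $f(YP)=f(Y)P$; and (iii) a permutation matrix satisfies $P\bm{1}=\bm{1}$ because each column has exactly one nonzero entry equal to $1$.

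The first step is to show that $H$ is in fact permutation \emph{invariant}. For each $m$ we have $h(W_m XP)=h(W_m X)P$ by (i) and (ii). Since the Hadamard product of two columns of $YP$ coincides with the permuted columns of $Y\circ Y'$, the stacked vector $\bigl[h(W_1X);\dots;\prod_{m=1}^k h(W_mX)\bigr]$ is right-multiplied by the same $P$ under $X\mapsto XP$. Multiplying on the right by $\bm{1}$ then sums the columns, and by (iii) this sum is insensitive to $P$, giving $H(XP)=H(X)$.

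Next, observe that the matrix $H(X)\bm{1}^\top$ is \emph{column-constant}: every column equals $H(X)$. Consequently $\bigl(H(X)\bm{1}^\top\bigr)P = H(X)(P^\top\bm{1})^\top = H(X)\bm{1}^\top$, so this factor is itself permutation-invariant. For the gating branch, (i) and (ii) give $\sigma(W_s XP)=\sigma(W_s X)P$. Combining the two branches, and using that the Hadamard product commutes with column permutation whenever the permutation is applied to both operands,
\begin{align*}
\sigma(W_s XP)\circ H(XP)\bm{1}^\top
&= \bigl(\sigma(W_s X)P\bigr)\circ\bigl(H(X)\bm{1}^\top P\bigr) \\
&= \bigl(\sigma(W_s X)\circ H(X)\bm{1}^\top\bigr)P.
\end{align*}
Finally, left multiplication by $W_o$ commutes with right multiplication by $P$, so $\PoM(XP)=\PoM(X)P$.

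There is no real obstacle here; the only point to handle carefully is the asymmetry between the two branches, namely that the mixing branch produces a permutation-\emph{invariant} quantity (because of the final $\bm{1}$ contraction) while the gating branch is permutation-\emph{equivariant}, and that the broadcast via $\bm{1}^\top$ is precisely what lets the invariant $H(X)$ be combined with the equivariant gates without breaking equivariance of the whole block.
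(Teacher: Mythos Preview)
Your proof is correct and follows essentially the same approach as the paper: you establish that $H$ is permutation invariant (via $P\bm{1}=\bm{1}$), that $\sigma(W_s\,\cdot\,)$ is permutation equivariant, and that the column-constant broadcast $H(X)\bm{1}^\top$ lets $P$ factor out of the Hadamard product. The paper's proof is simply a more compressed version of the same three observations.
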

\begin{proof}
    For a permutation $P$, we have 
    \begin{align}\PoM(XP) = W_o \left[ \sigma(W_s X P) \circ H(X P)\bm{1}^\top \right]. 
    \end{align}
    Notice that $H(X P) = H(X)$ because the sum is permutation invariant, and $\sigma(W_s X P) = \sigma(W_s X) P$ because $\sigma$ is an element-wise operation. 
    Noticing that $H(X)\bm{1}^\top$ has all identical columns allows us to move $P$ outside of the brackets to conclude the proof.
\end{proof}

More importantly, we can also prove a universal approximation theorem for Polymorphers similar to what is well known for Transformers~\cite{Yun20ICLR}. As the polynomial mixer is equivariant, it requires the use of positional encoding, which also underlines the similarity between $\PoM$ and MHA. 

We use the following standard definition of distance between functions that map sequences to sequences. Given two functions $f$ and $g: \mathbb{R}^{d_n}\rightarrow \mathbb{R}^{d_n}$ and an integer $A\leq p\leq \infty$, we define the distance $d_p$ as:
\begin{align}
    d_p(f, g) = \left(\int \| f(X) - g(X)\|_p^pdX\right)^{1/p}.
\end{align}

The following theorem holds:
\begin{theorem}[Universal approximation]
    Let $1 \leq p \leq \infty$ and $\epsilon > 0$, then for any given $f\in \mathcal{F}$ the set of continuous functions that map a compact domain in $\mathbb{R}^{d\times n}$ to $\mathbb{R}^{d\times n}$, there exists a Polymorpher $g$ with learned positional encoding such that $d_p(f, g) \leq \epsilon$.
\end{theorem}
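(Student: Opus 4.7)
The plan is to adapt the three-step strategy used by Yun et al.~\cite{Yun20ICLR} for Transformers. First, approximate $f$ in $d_p$ by a piecewise-constant function $\bar f$ obtained by quantizing its compact domain into a fine grid. Second, build a stack of Polymorphers whose output is a contextual mapping, i.e.\ whose column $i$ is a unique identifier of the pair (position $i$, full input sequence $X$) across all grid sequences. Third, use the feed-forward blocks of one more Polymorpher to apply, column-wise, the lookup table defined by $\bar f$ on these unique codes. Steps one and three are standard: piecewise-constant functions are dense in $L^p$ on a compact set, and two-layer feed-forward networks are universal on any finite set of inputs.

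The non-trivial step is constructing the contextual mapping. The learned positional encoding is used to make all columns of all grid sequences pairwise distinct, reducing the problem to a finite token alphabet. The key structural observation is that $H(X)$ is a stack of sums of products of learned features of individual tokens: by choosing the $W_m$ so that each polynomial term of $H$ depends essentially on one scalar feature $\phi$ of each token, and using a preceding feed-forward block to approximate the action of $h$ by monomials on the compact range of inputs, the rows of $H(X)$ can be made to realize the power sums $\sum_i \phi(x_i)^m$ for $m=1,\dots,k$. By Newton's identities, $n$ such power sums determine the multiset $\{\phi(x_1),\dots,\phi(x_n)\}$; varying $\phi$ across sufficiently many coordinates, and combining with the distinguishing positional encoding, allows us to recover the entire ordered sequence from $H(X)$.

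Once $H(X)$ injectively encodes the whole sequence, the gating branch provides position-specific access to this shared summary: choosing $W_s$ so that its rows depend sharply on the positional encoding of the queried column, the sigmoid can be driven arbitrarily close to a $\{0,1\}$ indicator, so that each column of $\sigma(W_sX)\circ H(X)\bm{1}^\top$ reads out a position-dependent slice of $H(X)$. The projection $W_o$, followed by a final Polymorpher block whose feed-forward part implements the lookup table defined by $\bar f$, then completes the construction, and taking the grid fine enough yields $d_p(f,g)\leq \epsilon$.

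The main obstacle is the contextual mapping step, since, unlike MHA, PoM shares information across positions only through the single vector $H(X)$: all columns see the same sum. The argument goes through because the polynomial-product structure of $H(X)$ is rich enough to encode an entire sequence via symmetric power sums together with positional encoding, and because the per-column sigmoid gate $\sigma(W_s X)$ can emulate the position-dependent query role that attention plays in the Transformer proof. Technical care is needed to control the approximation errors from using sigmoids and the fixed activation $h$ in place of exact indicators and monomials, but these reduce to standard scalar universal-approximation facts on a compact range.
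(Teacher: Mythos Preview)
Your proposal is correct and follows essentially the same approach as the paper: both adopt the Yun et al.\ three-step framework and reduce the theorem to showing that PoM is a contextual mapping, with power sums realized inside $H(X)$ as the key tool for encoding the entire sequence. Your use of Newton's identities and a sigmoid-as-indicator gate is a slightly more constructive variant of the paper's argument, which instead relies directly on the injectivity of $\sigma(W_s\,\cdot)$ and on sums of distinct exponentials being distinct functions of the exponent, but the core ideas coincide.
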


The proof follows exactly the same scheme as in ~\cite{Yun20ICLR}, where most of the heavy lifting is done by the feed-forward networks. Their main argument is to show that MHA can map every token in the sequence to a unique value that depends on the entire sequence, and then the feed-forward blocks can map those unique values to the desired output. In our case, we just have to ensure that the Polynomial Mixer has the same properties as MHA, which is obtained using the following lemma:

\begin{lemma}[Contextual mapping (informal)]
    There exists $k > 0$ for which any Polynomial Mixer $q$ of degree $k$ is a contextual mappings on $\mathbb{R}^{d\times n}$, that is:
    \begin{itemize}
        \item For any $X \in \mathbb{R}^{d\times n}$ with different entries, $q(X)$ has different entries.
        \item For any $X, X' \in \mathbb{R}^{d\times n}$ that differ at least by one element, then all entries of $q(L)$ and $q(L')$ are different.
    \end{itemize}
\end{lemma}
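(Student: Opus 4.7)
The plan is to parallel the construction of Yun et al.~\cite{Yun20ICLR}, where the feed-forward blocks shoulder the universal-approximation work and the mixing step need only realize a \emph{contextual mapping}, i.e., a per-token signature that depends on the full sequence. Because $\PoM$ is permutation equivariant (by the previous proposition), I rely on the learned positional encoding assumed in the theorem to guarantee that the columns of $X$ are pairwise distinct. The argument then decomposes into three steps: (i) for sufficiently large $k$, choose the $W_m$'s so that $H(X)$ injectively encodes the multiset of columns of $X$; (ii) combine $H(X)$ with the per-token gating $\sigma(W_s X)$ so that the entries of $\PoM(X)$ are pairwise distinct within a single $X$; (iii) show that whenever $X$ and $X'$ are not column-multiset-equivalent, no entry of $\PoM(X)$ coincides with any entry of $\PoM(X')$.

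For step (i), the cleanest route is to specialize $W_1 = \cdots = W_k = W$ with a generic full-column-rank $W$ and a strictly monotone activation $h$. The $m$-th block of $H(X)$ then coincides, coordinate by coordinate, with the element-wise $m$-th power sum $\sum_j h(WX_j)^{\circ m}$. Newton's identities, applied coordinate-wise, recover the multiset of scalars in each coordinate from the first $n$ such power sums. To lift this from coordinate-wise multisets to the multiset of vectors $\{h(WX_j)\}$ in $\mathbb{R}^D$, I would inflate $D$ and invoke the standard DeepSets-style observation that a generic linear projection separates distinct multisets of vectors. Taking $k \geq n$, a generic $W$, and an injective $h$ then makes $H$ injective on multisets of columns over the compact input domain.

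For step (ii), the $i$-th column of $\PoM(X)$ is $W_o\bigl[\sigma(W_s X_i) \circ H(X)\bigr]$. Once the columns of $X$ are pairwise distinct, a generic $W_s$ makes the columns of $\sigma(W_s X)$ pairwise distinct; Hadamard-multiplying by the common nonzero gating pattern $H(X)$ and then applying a full-column-rank $W_o$ preserves that distinctness. Step (iii) follows from step (i): if $X$ and $X'$ differ as multisets of columns, then $H(X) \neq H(X')$, and the fact that the $i$-th column of $\PoM(X)$ is a nontrivial joint function of $X_i$ and $H(X)$ forces, for a generic $W_s$ and $W_o$, that no column of $\PoM(X)$ agrees with any column of $\PoM(X')$.

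The hardest part will be the genericity step: I need a single tuple $(W_1,\dots,W_k,W_s,W_o)$ that simultaneously realizes injectivity of $H$, distinctness of the gated columns, and separation across all non-equivalent pairs $(X,X')$ in the compact domain. Each individual condition excludes only a measure-zero set of weights, but the family of conditions is indexed by an uncountable collection of sequence pairs, so a direct union bound is unavailable. A covering argument on the compact input domain together with uniform continuity of all the operations---exactly in the spirit of Yun et al.---yields a uniform lower bound on the separation gap and hence the existence of a single admissible weight tuple, for $k$ chosen large enough (of order $n$) to accommodate the Newton-identity recovery.
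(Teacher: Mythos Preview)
Your plan is sound in outline but takes a genuinely different route from the paper. For the first bullet the paper does essentially what you do: it notes that $H(X)$ is constant across columns, so distinctness of the output columns reduces to injectivity of $x\mapsto\sigma(W_s x)$, which holds as soon as $W_s$ has trivial kernel. For the second bullet, however, the paper does \emph{not} go through Newton's identities or multiset recovery. Instead it specializes all weights to identities and all activations to linear, reducing to the scalar claim that for two sets $X,X'$ differing in at least one element there is a $k$ with $x\sum_{x_i\in X}x_i^{k}\neq x'\sum_{x'_i\in X'}(x'_i)^{k}$ for all $x\in X$, $x'\in X'$. It observes that $P(t)=\sum_i x_i^{t}$ and $P'(t)=\sum_i (x'_i)^{t}$ are distinct finite sums of exponentials (different bases), hence differ at some integer $k$; then it rules out $xS_k=x'S'_k$ by a short ratio argument. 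This is more elementary and does not tie $k$ to $n$, but the $k$ it produces depends on the specific pair $(X,X')$ and the argument never addresses how to get a single $k$ (or a single weight tuple) that works uniformly over the compact domain---precisely the issue you flag and propose to resolve via a covering argument. So your approach is heavier but confronts the uniformity question explicitly; the paper's is lighter and more direct but leaves that uniformity implicit under the ``informal'' label.
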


The proof is deferred to the appendix and primarily uses the fact that a sufficiently high degree polynomial is uniquely defined by a sequence of point-wise evaluation. As noted in \cite{Yun20ICLR}, having the contextual mapping property is not so common as it requires to summarize uniquely the context while preserving the identity of the current token.

With these results, we show that a Polymorpher is as potent as a Transformer for sequence modeling.

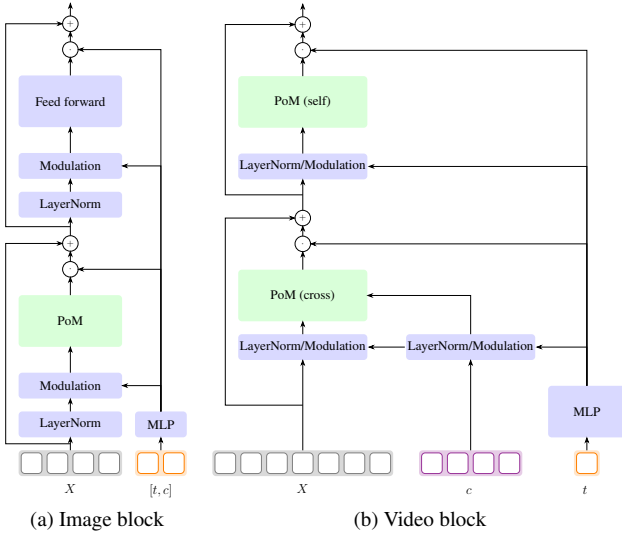
\begin{figure}[tb]
    \centering
    \begin{subfigure}[b]{0.3\columnwidth}
        \centering
        \resizebox{\columnwidth}{!}{
        \begin{tikzpicture}

\begin{scope}[local bounding box=input]
  \fill[gray!30, rounded corners] (0,0) rectangle (4,1);
  \foreach \x in {0.1, 1.1, 2.1, 3.1} {
    \fill[white, rounded corners] (\x, 0.1) rectangle (\x + 0.8, 0.9);
    \draw[gray, rounded corners] (\x, 0.1) rectangle (\x + 0.8, 0.9);
  }
\end{scope}

\begin{scope}[local bounding box=condition]
  \fill[orange!20, rounded corners] (4.5,0) rectangle (6.5,1);
  \foreach \x in {4.6, 5.6} {
    \fill[white, rounded corners] (\x, 0.1) rectangle (\x + 0.8, 0.9);
    \draw[orange, rounded corners] (\x, 0.1) rectangle (\x + 0.8, 0.9);
  }
\end{scope}

\node[below=0.2cm of input] {\Large $X$};
\node[below=0.2cm of condition] {\Large $[t, c]$};

\begin{scope}[local bounding box=mlp]
  \fill[blue!15, rounded corners] (4.5,1.5) rectangle (6.5,2.5);
  \node at (5.5,2) {\Large MLP};
\end{scope}

\draw[->, thin, -{Stealth[length=2mm,width=1.4mm]}] (condition.north) -- (mlp.south);

\begin{scope}[local bounding box=layernorm1]
  \fill[blue!15, rounded corners] (0,1.5) rectangle (4,2.5);
  \node at (2,2) {\Large LayerNorm};
\end{scope}

\draw[->, thin, -{Stealth[length=2mm,width=1.4mm]}] (input.north) -- (layernorm1.south);

\begin{scope}[local bounding box=modulation1]
  \fill[blue!15, rounded corners] (0,3) rectangle (4,4);
  \node at (2,3.5) {\Large Modulation};
\end{scope}

\draw[->, thin, -{Stealth[length=2mm,width=1.4mm]}] (layernorm1.north) -- (modulation1.south);
\draw[->, thin, -{Stealth[length=2mm,width=1.4mm]}] (mlp.north) |- (modulation1.east);

\begin{scope}[local bounding box=pom]
  \fill[green!15, rounded corners] (0,5) rectangle (4,7);
  \node at (2,6) {\Large PoM};
\end{scope}

\draw[->, thin, -{Stealth[length=2mm,width=1.4mm]}] (modulation1.north) -- (pom.south);

\begin{scope}[local bounding box=gate1]
  \draw[thick] (2,8) circle (0.3);
  \node at (2,8) {$\cdot$};
\end{scope}

\draw[->, thin, -{Stealth[length=2mm,width=1.4mm]}] (pom.north) -- (gate1.south);
\draw[->, thin, -{Stealth[length=2mm,width=1.4mm]}] (mlp.north) |- (gate1.east);

\begin{scope}[local bounding box=residual1]
  \draw[thick] (2,9) circle (0.3);
  \node at (2,9) {$+$};
\end{scope}

\draw[->, thin, -{Stealth[length=2mm,width=1.4mm]}] (gate1.north) -- (residual1.south);

\coordinate (midpoint) at ($(input.north)!0.5!(layernorm1.south)$);
\draw[->, thin, -{Stealth[length=2mm,width=1.4mm]}] (midpoint) -- ++(-2.5,0) |- (residual1.west);

\begin{scope}[local bounding box=layernorm2]
  \fill[blue!15, rounded corners] (0,10) rectangle (4,11);
  \node at (2,10.5) {\Large LayerNorm};
\end{scope}

\draw[->, thin, -{Stealth[length=2mm,width=1.4mm]}] (residual1.north) -- (layernorm2.south);

\begin{scope}[local bounding box=modulation2]
  \fill[blue!15, rounded corners] (0,11.5) rectangle (4,12.5);
  \node at (2,12) {\Large Modulation};
\end{scope}

\draw[->, thin, -{Stealth[length=2mm,width=1.4mm]}] (layernorm2.north) -- (modulation2.south);
\draw[->, thin, -{Stealth[length=2mm,width=1.4mm]}] (mlp.north) |- (modulation2.east);

\begin{scope}[local bounding box=ffw]
  \fill[blue!15, rounded corners] (0,13.5) rectangle (4,15.5);
  \node at (2,14.5) {\Large Feed forward};
\end{scope}

\draw[->, thin, -{Stealth[length=2mm,width=1.4mm]}] (modulation2.north) -- (ffw.south);

\begin{scope}[local bounding box=gate2]
  \draw[thick] (2,16.5) circle (0.3);
  \node at (2,16.5) {$\cdot$};
\end{scope}

\draw[->, thin, -{Stealth[length=2mm,width=1.4mm]}] (ffw.north) -- (gate2.south);
\draw[->, thin, -{Stealth[length=2mm,width=1.4mm]}] (mlp.north) |- (gate2.east);

\begin{scope}[local bounding box=residual2]
  \draw[thick] (2,17.5) circle (0.3);
  \node at (2,17.5) {$+$};
\end{scope}

\draw[->, thin, -{Stealth[length=2mm,width=1.4mm]}] (gate2.north) -- (residual2.south);

\coordinate (midpoint) at ($(residual1.north)!0.5!(layernorm2.south)$);
\draw[->, thin, -{Stealth[length=2mm,width=1.4mm]}] (midpoint) -- ++(-2.5,0) |- (residual2.west);

\draw[->, thin, -{Stealth[length=2mm,width=1.4mm]}] (residual2.north) -- ++(0,0.5);

\end{tikzpicture}
        }
        \caption{Image block}
        \label{fig:image_block}
    \end{subfigure}
    \hfill
    \begin{subfigure}[b]{0.67\columnwidth}
        \centering
        \resizebox{\columnwidth}{!}{
        \begin{tikzpicture}

\begin{scope}[local bounding box=input]
  \fill[gray!30, rounded corners] (0,0) rectangle (7,1);
  \foreach \x in {0.1, 1.1, 2.1, 3.1, 4.1, 5.1, 6.1} {
    \fill[white, rounded corners] (\x, 0.1) rectangle (\x + 0.8, 0.9);
    \draw[gray, rounded corners] (\x, 0.1) rectangle (\x + 0.8, 0.9);
  }
\end{scope}

\begin{scope}[local bounding box=condition]
  \fill[violet!20, rounded corners] (8.,0) rectangle (12.,1);
  \foreach \x in {8.1, 9.1, 10.1, 11.1} {
    \fill[white, rounded corners] (\x, 0.1) rectangle (\x + 0.8, 0.9);
    \draw[violet!80, rounded corners] (\x, 0.1) rectangle (\x + 0.8, 0.9);
  }
\end{scope}

\begin{scope}[local bounding box=time]
  \fill[orange!20, rounded corners] (14,0) rectangle (15,1);
  \fill[white, rounded corners] (14.1, 0.1) rectangle (14.9, 0.9);
  \draw[orange, rounded corners] (14.1, 0.1) rectangle (14.9, 0.9);
\end{scope}

\begin{scope}[local bounding box=x_ln]
  \fill[blue!15, rounded corners] (1,4.5) rectangle (6,5.5);
  \node at (3.5,5) {\Large LayerNorm/Modulation};
\end{scope}

\draw[->, thin, -{Stealth[length=2mm,width=1.4mm]}] (input.north) -- (x_ln.south);

\begin{scope}[local bounding box=c_ln]
  \fill[blue!15, rounded corners] (7.5,4.5) rectangle (12.5,5.5);
  \node at (10,5) {\Large LayerNorm/Modulation};
\end{scope}

\draw[->, thin, -{Stealth[length=2mm,width=1.4mm]}] (condition.north) -- (c_ln.south);
\draw[->, thin, -{Stealth[length=2mm,width=1.4mm]}] (c_ln.west) -- (x_ln.east);

\begin{scope}[local bounding box=mlp]
  \fill[blue!15, rounded corners] (13,1.5) rectangle (16,3.5);
  \node at (14.5,2.5) {\Large MLP};
\end{scope}

\draw[->, thin, -{Stealth[length=2mm,width=1.4mm]}] (time.north) -- (mlp.south);
\draw[->, thin, -{Stealth[length=2mm,width=1.4mm]}] (mlp.north) |- (c_ln.east);

\begin{scope}[local bounding box=c_pom]
  \fill[green!15, rounded corners] (1,6) rectangle (6,8);
  \node at (3.5,7) {\Large PoM (cross)};
\end{scope}

\draw[->, thin, -{Stealth[length=2mm,width=1.4mm]}] (x_ln.north) -- (c_pom.south);
\draw[->, thin, -{Stealth[length=2mm,width=1.4mm]}] (c_ln.north) |- (c_pom.east);

\begin{scope}[local bounding box=gate1]
  \draw[black, thick] (3.5,9) circle (0.3);
  \node at (3.5,9) {$\cdot$};
\end{scope}

\draw[->, thin, -{Stealth[length=2mm,width=1.4mm]}] (c_pom.north) -- (gate1.south);
\draw[->, thin, -{Stealth[length=2mm,width=1.4mm]}] (mlp.north) |- (gate1.east);

\begin{scope}[local bounding box=residual1]
  \draw[black, thick] (3.5,10) circle (0.3);
  \node at (3.5,10) {$+$};
\end{scope}

\draw[->, thin, -{Stealth[length=2mm,width=1.4mm]}] (gate1.north) -- (residual1.south);
\coordinate (midpoint) at ($(input.north)!0.5!(x_ln.south)$);
\draw[->, thin, -{Stealth[length=2mm,width=1.4mm]}] (midpoint) -- ++(-3,0) |- (residual1.west);

\begin{scope}[local bounding box=x_ln2]
  \fill[blue!15, rounded corners] (1,11.5) rectangle (6,12.5);
  \node at (3.5,12) {\Large LayerNorm/Modulation};
\end{scope}

\draw[->, thin, -{Stealth[length=2mm,width=1.4mm]}] (residual1.north) -- (x_ln2.south);
\draw[->, thin, -{Stealth[length=2mm,width=1.4mm]}] (mlp.north) |- (x_ln2.east);

\begin{scope}[local bounding box=s_pom]
  \fill[green!15, rounded corners] (1,13.5) rectangle (6,15.5);
  \node at (3.5,14.5) {\Large PoM (self)};
\end{scope}

\draw[->, thin, -{Stealth[length=2mm,width=1.4mm]}] (x_ln2.north) -- (s_pom.south);

\begin{scope}[local bounding box=gate2]
  \draw[black, thick] (3.5,16.5) circle (0.3);
  \node at (3.5,16.5) {$\cdot$};
\end{scope}

\draw[->, thin, -{Stealth[length=2mm,width=1.4mm]}] (s_pom.north) -- (gate2.south);
\draw[->, thin, -{Stealth[length=2mm,width=1.4mm]}] (mlp.north) |- (gate2.east);

\begin{scope}[local bounding box=residual2]
  \draw[black, thick] (3.5,17.5) circle (0.3);
  \node at (3.5,17.5) {$+$};
\end{scope}

\draw[->, thin, -{Stealth[length=2mm,width=1.4mm]}] (gate2.north) -- (residual2.south);

\coordinate (midpoint) at ($(residual1.north)!0.5!(x_ln2.south)$);
\draw[->, thin, -{Stealth[length=2mm,width=1.4mm]}] (midpoint) -- ++(-3,0) |- (residual2.west);

\draw[->, thin, -{Stealth[length=2mm,width=1.4mm]}] (residual2.north) -- ++(0,0.5);

\node[below=0.2cm of input] {\Large $X$};
\node[below=0.2cm of condition] {\Large $c$\color{white}{$]$}};
\node[below=0.2cm of time] {\Large $t$\color{white}{$]$}};

\end{tikzpicture}
        }
        \caption{Video block}
        \label{fig:video_block}
    \end{subfigure}
    \caption{\textbf{Building blocks for our diffusion models using PoM.} For class-conditional image generation (a), we follow strictly DiT\cite{peebles23iccv} in the AdaLN variant, replacing multi-head attention with PoM. For text to video generation (b), we follow a hybrid approach in which the encoded text tokens are incorporated into the video tokens using PoM instead of cross attention, while the time is used as a modulation. Modulation means component-wise scale and shift modification based on the coefficients predicted by the MLP (similarly to the AdaLN approach).}
    \label{fig:main}
\end{figure}
\begin{figure*}[tb]
    \centering
    \includegraphics[width=\textwidth]{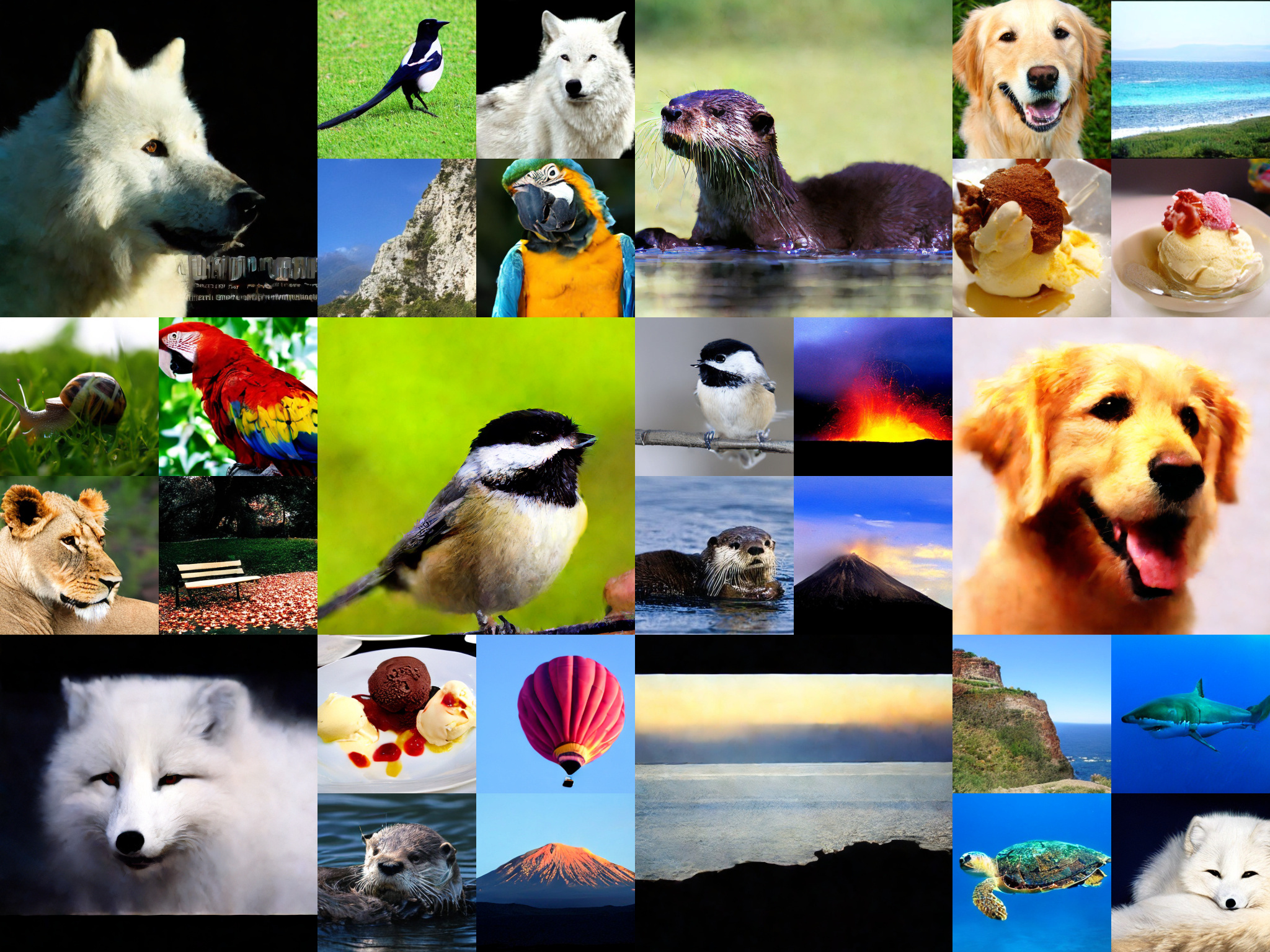}
    \caption{\textbf{Qualitative results on class-conditional generation}. We show images sampled with the model DiPoM-XL/2 trained with the flow-matching loss $\mathcal{L}_\text{FM}$ at several resolutions for different classes. We use classifier-free guidance with $\omega=4s/s_0$ with $s$ the scale of the image and $s_0$ the reference scale (256).}
    \label{fig:curated}
\end{figure*}
\section{Diffusion with PoM}
Armed with the definition of PoM and Polymorphers, we now design diffusion models taking inspiration from models based on MHA, and show that PoM can replace attention in practice.
We follow the design choices of DiT~\cite{peebles23iccv} and propose a class-conditional image generation polymorpher as well as a text-to-video generation polymorpher.

\subsection{Architecture design}
\paragraph{Image generation}
For image generation, the class-conditional polymorpher is similar to the AdaLN variant of DiT.
The image is encoded through the VAE of SD1.5~\cite{rombach22cvpr} and then features are aggregated into visual tokens $X$.
We add a 2D cosine positional encoding to them before we feed them to the model.
The class $c$ and the time step $t$ are embedded using an embedding matrix and a cosine embedding respectively before being summed together.

The model consists in several blocks that combine modulations, PoM and feed forward networks as shown on Figure \ref{fig:image_block}. In each block, the modulation consists in predicting from the condition $c+t$ a scale $\gamma$ and a shift $\beta$ that modify the input by
\begin{align}
    x \leftarrow \gamma(x - \beta).
\end{align}
Similarly to DiT, the MLP also predicts gates $\sigma$ that can shut down an entire block $f$ thanks to
\begin{align}
    x \leftarrow x + (1+s)f(x),
\end{align}
with the $1$ in $1+s$ being added so that there is a full residual connection when the MLP predicts $f(x)=0$.
For naming the architectures, we follow the same parametrization as in DiT. Namely, an \emph{S/2} model has a kernel size and stride of 2 for aggregating the VAE features into tokens, and 12 blocks of dimension 384.
Similarly, an XL/2 model that has 28 blocks of dimension 1152. For the PoM operation inside each block, we use an polynomial of order 2 with an expansion factor of 2 unless specified otherwise. Pytorch code for the blocks is given in appendix.

\paragraph{Video generation}
For video generation from text, we extend the DiT architecture to handle text as a condition. We first encode video clips using the 3D VAE from CogXVideo~\cite{yang2024cogvideox} and then group the features into visual tokens using a kernel size of $2\times 2 \times 2$ (with $2\times 2$ for the spatial axes, and $2$ for the temporal axis resulting in a downscaling factor of $16\times 16\times 8$).
We add a 3D cosine positional encoding to the visual tokens before feeding them to the model.
The text is encoded using T5~\cite{raffel2020exploring} embeddings and the time step is encoded using a cosine embedding.

The model consists in blocks using PoM to aggregate information between the text condition and the visual tokens as shown on Figure \ref{fig:video_block}. More precisely, a first PoM operation is used in a cross fashion, similar to cross-attention, to aggregate information from the text tokens into the visual tokens. Then, a second PoM operation is used to aggregate information among the visual tokens themselves, similar to what self-attention would do. Finally, a feed forward module processes the visual tokens only. The time step embedding is used in an MLP to predict the coefficients of modulations and gates at each of the operations.

We train a single model of size XL/2 that consists in 20 layers of dimension 1152 resulting in 1.1B parameters.

\subsection{Training setup}

For class-conditional image generation, we train on ImageNet. We rescale each image to 256 pixels on their smallest size and then take a crop of size $256\times 256$. We use both the original images and horizontally flipped version for a total of 2.4M images. We train a model $f_\theta$ either using the diffusion loss:
\begin{align}
    \mathcal{L}_\text{D} = E_{t\sim \mathcal{U}[0,1]} \| \varepsilon_t - f_\theta(x_t, c, t) \|^2,
\end{align}
or the flow matching loss:
\begin{align}
    \mathcal{L}_\text{FM} = E_{t\sim \mathcal{U}[0,1]} \| v_t - f_\theta(x_t, c, t) \|^2,
\end{align}
with $v_t = \varepsilon_t - x_0$. For each experimental result, we mention which loss is used, but the models are trained similarly without requiring change in training hyper-parameters. We use AdamW with a constant learning rate of $10^{-4}$ followed by a short cooldown with square root decay~\cite{hagele2024scaling}.

For video, we used WebVid-2M~\cite{Bain21} that we rescale to $240\times 384$ at 16 fps. We keep only the first 5 seconds, corresponding to 80 frames. 
This results in a total of 2.5M clips. We train using the flow matching loss $\mathcal{L}_\text{FM}$. We also use AdamW with a constant learning rate of $10^{-4}$ followed by a short cooldown with square root decay.

\begin{table*}[tb]
    \centering
    \begin{tabular}{lccccccc}
         \textbf{Model} & \textbf{Sample config} & \textbf{\#train} & \textbf{FID$\downarrow$} & \textbf{IS$\uparrow$} &\textbf{ Precision$\uparrow$} & \textbf{Recall$\uparrow$} \\
         Mask-GIT~\cite{chang2022maskgit} & &  & 6.18 & 182.1 & 0.80 & 0.51 \\ \hline
\rowcolor{gray!10}         DIFFUSSM-XL$^\dag$~\cite{yan24cvpr} & 250 steps DDPM &  660M & 2.28 & 259.1 & 0.86 & 0.56 \\
         DiM-H$^\dag$~\cite{teng2024dim} & 25 steps DPM++ &  480M & 2.21 & - & - & - \\ \hline
\rowcolor{gray!10}         ADM-G~\cite{dhariwal2021diffusion} & 250 steps DDIM   &  500M & 4.59 & 186.7 & 0.83 & 0.53 \\
         LDM-4-G~\cite{rombach22cvpr} & 250 steps DDIM  &  215M & 3.60 & 247.7 & 0.87 & 0.48 \\
\rowcolor{gray!10}         RIN~\cite{jabri23icml} & 1000 steps DDPM &  600M & 3.42 & 182.0 & - & - \\
         DiT-XL/2$^\dag$~\cite{peebles23iccv} &  250 steps DDPM &  1.8B & {\color{blue}\textbf{2.27}} & {\color{blue}\textbf{278.2}} & {\color{blue}\textbf{0.83}} & {\color{blue}0.57} \\
\rowcolor{gray!10}         SiT-XL/2$^\dag$~\cite{ma24eccv} & 125 steps Heun  &  1.8B & {\color{red}\textbf{2.15}} & {\color{red}254.9} & {\color{red}\textbf{0.81}} & {\color{red}\textbf{0.60}} \\ \hline
         DiPoM-XL/2 $\mathcal{L}_\text{D}$ (\textbf{ours}) & 250 steps DDIM &  950M & {\color{blue}2.46} & {\color{blue}240.6} & {\color{blue}0.78} & {\color{blue}\textbf{0.60}}\\
\rowcolor{gray!10}         DiPoM-XL/2 $\mathcal{L}_{\text{FM}}$ (\textbf{ours}) & 125 steps Heun &  950M & {\color{red}3.70} & {\color{red}\textbf{255.2}} & {\color{red}0.79} & {\color{red}0.56}
    \end{tabular}
    \caption{\textbf{Quantitative results on ImageNet $256\times 256$ class-conditional generation.} \#train denotes the number of training images seen during train (i.e., batch size $\times$ number of training steps). $\dag$ denotes methods evaluated against the Imagenet training set instead of the usual ADM evaluation archive. We color in {\color{blue}blue} (respectively in {\color{red}red}) DiT and PoM architectures of equivalent size that are trained with the same diffusion loss $\mathcal{L}_\text{D}$ (respectively flow-matching loss $\mathcal{L}_\text{FM}$), and we bold the best values between the two, even though the results are not evaluated against the same reference set.}
    \label{tab:img_sota}
\end{table*}
\begin{table}[tb]
    \begin{center}
    \begin{tabular}{cccccc}
        Degree & Expand & FID$\downarrow$ & IS$\uparrow$ & Precision & Recall\\
         \hline
\rowcolor{gray!10}        1 & 12 & 90.1  &  15.1 & 0.27 & 0.36 \\
        2 & 6  & 87.0  &  15.8 & \textbf{0.29} & 0.37\\
\rowcolor{gray!10}        3 & 4  & \textbf{86.1}  &  \textbf{16.0} & \textbf{0.29} & \textbf{0.38}\\
        4 & 3  & 88.8  &  15.5 & 0.28 & 0.36\\
\rowcolor{gray!10}        6 & 2  & 90.7  &  15.0 & 0.28 & 0.36 
    \end{tabular}
    \end{center}
    \caption{\textbf{Comparison of different degrees of Polynomial Mixer} at a constant memory budget with a S/2 model on 10k images from Imagenet. Having a degree $\geq 2$ is necessary to get good performances, but there is a trade-off between the degree and the expansion factor.}
    \label{tab:degree}
\end{table}
\begin{figure}[tb]
    \centering
    \begin{tikzpicture}
    \pgfplotsset{
        scale only axis,
        xmin=10, xmax=230,
        y axis style/.style={
            yticklabel style=#1,
            ylabel style=#1,
            y axis line style=#1,
            ytick style=#1
       }
    }
    \begin{axis}[
      width=0.7\columnwidth,
      xmode=log,
      axis y line*=right,
      axis x line=none,
      ymin=20, ymax=245,
      ylabel=IS,
      y axis style=orange!80,
      xmajorgrids=true,
      xminorgrids=true,
      major grid style={gray, dashed},
      minor grid style={gray!50, dotted}
    ]
    \addplot[only marks,mark=square,orange!80,mark size=3pt] 
      coordinates{
        (11.751,37.70355224609375)
        (46,75.90705108642578)  
        (145.758,159.68934631347656)
        (214.184, 214.12852478027344)
    };
    \addplot[orange!80,thick,style={line width=1.5pt}] table[
        y={create col/linear regression={y=y}}
    ] {
        x y
        11.751 37.70355224609375
        46 75.90705108642578
        145.758 159.68934631347656
        214.184 214.12852478027344
    };
    \end{axis}
    \begin{axis}[
      width=0.7\columnwidth,
      xmode=log,
      axis y line*=left,
      y axis style=blue!70,
      ymin=5, ymax=55,
      xlabel=Flops (G),
      ylabel=FID,
      xmajorgrids=true,
      xminorgrids=true,
      major grid style={gray, dashed},
      minor grid style={gray!50, dotted}
    ]
    \addplot[only marks,mark=o,blue!70,mark size=3pt] 
      coordinates{
        (11.751,46.42305612553264)
        (46,24.092694389677263) 
        (145.758, 12.542542810128566)
        (214.184, 8.775708317201747)
    };
    \addplot[blue!70,thick,style={line width=1.5pt}] table[
        y={create col/linear regression={y=y}}
    ] {
        x y
        11.751 46.42305612553264
        46 24.092694389677263
        145.758 12.542542810128566
        214.184 8.775708317201747
    };
    \end{axis}
    \end{tikzpicture}
    \caption{\textbf{Scaling laws for a DiT-like architecture with attention replaced by PoM}. FIDs and Inception Scores (IS) are computed on 10k samples with classifier free guidance ($\omega=1$), and shown with a linear regression in log space. Performances scale with the computation budget, similarly to transformers.}
    \label{fig:scaling_diffusion}
\end{figure}

\begin{table*}[t]
\resizebox{\linewidth}{!}{
\begin{tabular}{lccccccccc}
\toprule
\multicolumn{1}{c}{\textbf{Models}} & \multicolumn{1}{c}{\textbf{Subject}} & \multicolumn{1}{c}{\textbf{Background}} & \multicolumn{1}{c}{\textbf{Temporal}} & \multicolumn{1}{c}{\textbf{Motion}} & \multicolumn{1}{c}{\textbf{Dynamic}} & \multicolumn{1}{c}{\textbf{Aesthetic}} & \multicolumn{1}{c}{\textbf{Imaging}} & \multicolumn{1}{c}{\textbf{Object}} \\
 & \multicolumn{1}{c}{\textbf{Consistency}} & \multicolumn{1}{c}{\textbf{Consistency}} & \multicolumn{1}{c}{\textbf{Flickering}} & \multicolumn{1}{c}{\textbf{Smoothness}} & \multicolumn{1}{c}{\textbf{Degree}} & \multicolumn{1}{c}{\textbf{Quality}} & \multicolumn{1}{c}{\textbf{Quality}} & \multicolumn{1}{c}{\textbf{Class}} \\
\midrule
\rowcolor{gray!10}\color{gray}  LaVie \cite{wang2023lavie} &\color{gray}  91.4\% &\color{gray} 97.5\% &\color{gray} 98.3\% &\color{gray} 96.4\% &\color{gray} 49.7\% &\color{gray} 54.9\% &\color{gray} 61.9\% &\color{gray} 91.8\% \\
 \color{gray} ModeScope \cite{wang2023modelscopetexttovideotechnicalreport} &\color{gray} 89.9\% &\color{gray} 95.3\% &\color{gray} 98.3\% &\color{gray} 95.8\% &\color{gray} 66.4\% &\color{gray} 52.1\% &\color{gray} 58.6\% &\color{gray} 82.3\% \\
\rowcolor{gray!10}  \color{gray} VideoCrafter \cite{he2022latent} &\color{gray} 86.2\% &\color{gray} 92.9\% &\color{gray} 97.6\% &\color{gray} 91.8\% &\color{gray} 89.7\% &\color{gray} 44.4\% &\color{gray} 57.2\% &\color{gray} 87.3\% \\
 \color{gray} CogVideo \cite{hong2023cogvideo} &\color{gray} 92.2\% &\color{gray} 96.2\% &\color{gray} 97.6\% &\color{gray} 96.5\% &\color{gray} 42.2\% &\color{gray} 38.2\% &\color{gray} 41.0\% &\color{gray} 73.4\% \\
 \rowcolor{gray!10} V-DiPoM-XL/2 \emph{no-mask} & 90.6\% & 96.6\% & 99.7\% & 97.3\% & 31.7\% & 28.6\% & 47.1\% & 29.3\% \\
 V-DiPoM-XL/2 \emph{b-causal} & 80.4\% & 92.2\% & 98.1\% &  97.4\% & 37.5\% & 30.5\% & 47.9\% & 30.0\% \\
\midrule
\multicolumn{1}{c}{\textbf{Models}} & \multicolumn{1}{c}{\textbf{Multiple}} & \multicolumn{1}{c}{\textbf{Human}} & \multicolumn{1}{c}{\textbf{Color}} & \multicolumn{1}{c}{\textbf{Spatial}} & \multicolumn{1}{c}{\textbf{Scene}} & \multicolumn{1}{c}{\textbf{Appearance}} & \multicolumn{1}{c}{\textbf{Temporal}} & \multicolumn{1}{c}{\textbf{Overall}} \\
 & \multicolumn{1}{c}{\textbf{Objects}} & \multicolumn{1}{c}{\textbf{Action}} & & \multicolumn{1}{c}{\textbf{Relationship}} & & \multicolumn{1}{c}{\textbf{Style}} & \multicolumn{1}{c}{\textbf{Style}} & \multicolumn{1}{c}{\textbf{Consistency}} \\
\midrule
\rowcolor{gray!10} \color{gray} LaVie \cite{wang2023lavie} &\color{gray} 33.3\% &\color{gray} 96.8\% &\color{gray} 86.4\% &\color{gray} 34.1\% &\color{gray} 52.7\% &\color{gray} 23.6\% &\color{gray} 25.9\% &\color{gray} 26.4\% \\
\color{gray} ModeScope \cite{wang2023modelscopetexttovideotechnicalreport} &\color{gray} 39/0\% &\color{gray} 92.4\% &\color{gray} 81.7\% &\color{gray} 33.7\% &\color{gray} 39.3\% &\color{gray} 23.4\% &\color{gray} 25.4\% &\color{gray} 25.8\% \\
\rowcolor{gray!10}\color{gray}  VideoCrafter \cite{he2022latent} &\color{gray} 25.9\% &\color{gray} 93.0\% &\color{gray} 78.8\% &\color{gray} 36.7\% &\color{gray} 43.4\% &\color{gray} 21.6\% &\color{gray} 25.4\% &\color{gray} 25.2\% \\
\color{gray} CogVideo \cite{hong2023cogvideo} &\color{gray} 18.1\% &\color{gray} 78.2\% &\color{gray} 79.6\% &\color{gray} 18.2\% &\color{gray} 28.2\% &\color{gray} 22.0\% &\color{gray} 7.8\% &\color{gray} 7.70\% \\
 \rowcolor{gray!10} V-DiPoM-XL/2 \emph{no-mask} & 1.9\% & 21.6\% & 76.3\% & 7.6\% & 2.8\% & 21.4\% & 13.4\% & 15.1\% \\
 V-DiPoM-XL/2 \emph{b-causal} & 3.3\% &  31.0\% & 69.5\% & 10.4\% & 3.0\% &  21.2\% & 17.2\% & 17.3\% \\
\end{tabular}
}
    \caption{\textbf{Quantitative results on VBench~\cite{huang24cvpr}}. We compare the same architecture of a V-DiPoM-XL/2 trained with the flow-matching loss $\mathcal{L}_\text{FM}$ using either no mask (denoted \emph{no-mask}) or block-causal masking (denoted \emph{b-causal}). We report results from the literature taken from~\cite{huang24cvpr} to provide some calibration, but noting that the comparison is not fair as these models are trained on much larger and richer datasets than ours, leading to much richer vocabulary and better semantic understanding.}
    \label{tab:vbench}
\end{table*}

\section{Experiments}
We first show result on class-conditional image generation and then of text-to-video generation.

\subsection{Class-conditional image generation}

\paragraph{Quantitative results}
We compare the results of our XL/2 model trained with the diffusion loss to the state of the art on Table~\ref{tab:img_sota}. We compute the Fréchet Inception Distance (FID), the Inception Score (IS), precision (P) and recall (R) using the code from ADM~\cite{dhariwal2021diffusion} on 50k generated images. The table is split between methods on masked encoding (Mask-GIT~\cite{chang2022maskgit}), diffusion models based on SSM and diffusion models based on attention. Results are extracted from the corresponding papers. Our images are generated with 250 steps of the DDIM sampler for the model trained with the diffusion loss $\mathcal{L}_\text{D}$, and 125 steps of Heun sampler for the model trained with the flow-matching loss $\mathcal{L}_\text{FM}$, with classifier free guidance (CFG, $\omega=0.7$ in both cases).

Using the evaluation code and reference set from ADM~\cite{dhariwal2021diffusion}, we obtain an FID of 2.46, which is slightly above that of the comparable DiT architecture, but notice that our model was trained for only half of the number steps of DiT. 
In addition, we found FID to be very unreliable as a metric, as it is highly varying with the reference set. 
For example, using the validation set of ImageNet, we obtained 3.45 FID\footnote{It was shown in ~\cite{picard21} that randomness affects significantly the results.}. 
We obtain a slightly lower IS compared to DiT, but this could be improved by using a higher CFG.
Indeed, we show in appendix that the trade-off between FID ans IS can reach as high as 300 IS at the cost of a much higher FID. 
We obtain a precision/recall trade-off comparable to DiT, slightly lower on precision but also higher on recall.

Overall, the results obtained using PoM are on par with the literature, showing that PoM can be used as a drop-in replacement for multi-head attention in a neural architecture, without requiring either architectural changes or training hyper-parameter tuning.

\paragraph{Qualitative results}
We further fine-tune the model on higher resolution data for a small number of steps to obtain a collection of models able to sample images up to $1024 \times 1024$ resolution (which is the maximum resolution we found reasonable to upscale to on ImageNet).
We show selected samples at these higher resolution on Figure~\ref{fig:curated}.
At higher resolution, some classes are collapsed due to the lack of available data.

\paragraph{Ablation study}
We study the impact of the degree of the polynomial on Table \ref{tab:degree}. 
To enable a fair comparison, we consider a set of S/2 models that have the same dimension for $H(X)$ and compute different trade-offs between the degree of the polynomials and their dimension.
As we can see, having at least second order polynomials is crucial to obtain the best performance.
This is consistent with the intuition that $H(X)$ has to contain sufficient statistics about the sequence and that using only the mean is not sufficient for that purpose.

We also study scaling laws for PoM by training models at different scales (S/2, B/2, L/2 and XL/2 following the DiT naming scheme), has shown on Figure~\ref{fig:scaling_diffusion}.
PoM enjoys exponential decrease of the FID with respect to the sampling computing complexity as shown by a linear regression on the logarithmic plot.
This is similar to what was observed for transformers in DiT~\cite{peebles23iccv}.

\subsection{Text to video generation}

We evaluate our model generating videos of 5 seconds at 16 fps and 240p resolution on VBench~\cite{huang24cvpr} and show the results in Table~\ref{tab:vbench}.
Note that contrarily to ImageNet, video generation is not as well standardized and models differ dramatically in terms of size, complexity and training dataset.
Notably, most text-to-video generation models are trained on a mix of images and videos to get more diverse captions.
In our case, we want to study the impact of enforcing temporal causality in the generation process and as such we limit our train set to WebVid-2M~\cite{Bain21} only.
Due to this smaller training set, we observed that our models are limited to a smaller vocabulary of objects, motion and styles.

We compare the standard architecture (denoted \emph{no-mask}) with the use of a block-causal mask as detailed in section~\ref{sec:causal} (denoted as \emph{b-causal}).
As we can see, the impact of using a block causal mask is negative on some tasks like \emph{subject consistency}, \emph{background consistency} and \emph{color}.
This can be explain by the model struggling to follow the prompt for the first frames in the \emph{block causal} case, which penalizes consistency, whereas the \emph{no-mask} case can leverage information from later frames to improve consistency.
Interestingly, using a \emph{block causal} mask improves temporal tasks like \emph{dynamic degree}, \emph{human action} and \emph{temporal style}, which shows the importance of modeling properly the temporal aspect for these tasks.

\section{Discussion}
In this paper, we presented PoM, the Polynomial Mixer, a building block for neural networks that aims at replacing attention.
PoM has a complexity linear with the sequence length and we prove it is a universal senquence-to-sequence approximator.
To demonstrate the effectiveness of PoM, we train image and video generation models with it in lieu of Multi-Head Attention. These diffusion models obtain competitive results while being able to generate higher resolution images faster than with attention.

PoM is very interesting for high-definition video of long duration.
However, the extreme cost of training such model makes this endeavor clearly out of the scope of a research paper.
Another area where PoM could shine is LLMs and more particularly multimodal LLMs.
Indeed, LLMs are causal, which means the generation of text could greatly benefit from the $\mathcal{O}(1)$ complexity of PoM for causal sequence.
In addition, recent works~\cite{zhou2024transfusionpredicttokendiffuse} show that next token prediction and diffusion objectives can be merged in a single model. In that case the ability of PoM to seamlessly adapt from causal to block-causal masking scheme greatly reduces the complexity of such mixed training objective.
As for high definition video, the extreme cost of training such large models also renders this endeavor out of the scope of a research paper.

\section{Acknowledgment}
This work was granted access to the HPC resources of IDRIS under the allocation 2024-AD011013085R2 made by GENCI. The authors would like to thank Vincent Lepetit, Gül Varol, Loic Landrieu and Dimitris Samaras for their insightful comments and suggestions.

{
    \small
    \bibliographystyle{splncs04}
    \bibliography{main}
}

\clearpage
\appendix
\section{PoM pytorch code}
In this section, we provide code in Pytorch for the main parts of the Polynomial Mixer as well as our diffusion blocks.

We found that writing dedicated functions for specific degrees led to faster runtime due to the ability of the PyTorch's compiler to optimize them.
We show below implementation for degrees 2, 3 and 4.
\begin{lstlisting}[language=Python, caption=Pytorch code for order specific PoM functions.]
@torch.compile
def po2(x: torch.Tensor):
    h1, h2 = gelu(x).chunk(2, dim=-1)
    h2 = h2 * h1
    return torch.cat([h1, h2], dim=-1)

@torch.compile
def po3(x: torch.Tensor):
    h1, h2, h3 = gelu(x).chunk(3, dim=-1)
    h2 = h2 * h1
    h3 = h3 * h2
    return torch.cat([h1, h2, h3], dim=-1)

@torch.compile
def po4(x: torch.Tensor):
    h1, h2, h3, h4 = gelu(x).chunk(4, dim=-1)
    h2 = h2 * h1
    h3 = h3 * h2
    h4 = h4 * h3
    return torch.cat([h1, h2, h3, h4], dim=-1)
\end{lstlisting}

Next, we show the function that computes both the polynomial and the mixing depending on the degree and the presence of a mask.
\begin{lstlisting}[language=Python, caption=Pytorch code for the complete polynomial and mixing part.]
def high_order_aggregation_(x: torch.Tensor, k: int, mask=None):
    if k == 2:
        h = po2(x)
    elif k == 3:
        h = po3(x)
    elif k == 4:
        h = po4(x)
    else:
        h = list(gelu(x).chunk(k, dim=-1))
        for i in range(1, k):
            h[i] = h[i] * h[i-1]
        h = torch.cat(h, dim=-1)
    if mask is None:
        h = h.mean(dim=1, keepdims=True)
    else:
        if mask.dim()==2:
            h = mask_mixer(h, mask.to(h.device))
        elif mask.dim() ==3:
            h = full_mask_mixer(h, mask.to(h.device))
        else:
            raise Exception('unsupported dim for mask (should be 2,3 or None)')
    return h
\end{lstlisting}

In the case the mask is 3 dimensional (batch, queries, context), we have a dedicated function that performs the partial sums. Note that this implementation is not optimized and that more speedup could be gained with a compiled mask.
\begin{lstlisting}[language=Python, caption=Pytorch code for the mixer part with full mask.]
def full_mask_mixer(h, mask):
    mask = mask.type(h.dtype)
    h = torch.einsum('bnd, bmn -> bmd', h, mask)  # b batch, n context tokens, m query tokens, d dim
    h = h / (1.e-7 + mask.sum(dim=2, keepdims=True))
    return h
\end{lstlisting}

The selection operation is very simple and consists in an element-wise product. The whole PoM operation is just the computation of $H(X)$ followed by the selection.
\begin{lstlisting}[language=Python, caption=Pytorch code for the selection part and the whole PoM function.]
@torch.compile
def high_order_selection_(x: torch.Tensor, h: torch.Tensor):
    return F.sigmoid(x) * h

def pom(xq: torch.Tensor, xc: torch.Tensor, k: int, mask=None):
    h = high_order_aggregation_(xc, k, mask)
    o = high_order_selection_(xq, h)
    return o
\end{lstlisting}

In the PoM module, we add the projections $W_{1\dots m}, W_s$ and $W_o$ for each part of the PoM operation.
\begin{lstlisting}[language=Python, caption=Pytorch module for PoM.]
class PoM(nn.Module):
    def __init__(self, dim, order, order_expand, bias=True):
        super().__init__()
        self.dim = dim
        self.order = order
        self.order_expand = order_expand
        self.ho_proj = nn.Linear(dim, order*order_expand*dim, bias=bias)
        self.se_proj = nn.Linear(dim, order*order_expand*dim, bias=bias)
        self.ag_proj = nn.Linear(order*order_expand*dim, dim, bias=bias)
        self.hom = hom

    def forward(self, xq, xc=None, mask=None):
        if xc is None:
            xc = xq # self attention

        s = self.se_proj(xq)
        h = self.ho_proj(xc)
        sh = self.hom(s, h, self.order, mask)

        # output projection
        return self.ag_proj(sh)
\end{lstlisting}

For image diffusion, the base building block is simply a PoM module followed by an MLP, with residual connections and AdaLN modulations.
\begin{lstlisting}[language=Python, caption=Pytorch module for the image diffusion block.]
def modulation(x, scale, bias):
    return x * (1+scale) + bias
    
class DiPBlock(nn.Module):
    def __init__(self, dim: int, order: int, order_expand: int, ffw_expand: int):
        super().__init__()
        self.dim = dim
        self.order = order
        self.order_expand = order_expand
        self.ffw_expand = ffw_expand

        self.mha_ln = nn.LayerNorm(dim, elementwise_affine=False, eps=1e-6)
        self.pom = PoM(dim, order=order, order_expand=order_expand, bias=True)
        self.ffw_ln = nn.LayerNorm(dim, elementwise_affine=False, eps=1e-6)
        self.ffw = nn.Sequential(nn.Linear(dim, ffw_expand * dim, bias=True),
                                 nn.GELU(),
                                 nn.Linear(ffw_expand * dim, dim, bias=True))
        self.cond_mlp = nn.Sequential(
                                 nn.SiLU(),
                                 nn.Linear(dim, 4 * dim, bias=True))
        self.gate_mlp = nn.Sequential(
                                 nn.SiLU(),
                                 nn.Linear(dim, 2 * dim, bias=True))


    def forward(self, x, c):
        s1, b1, s2, b2 = self.cond_mlp(c).chunk(4, -1)
        g1, g2 = self.gate_mlp(c).chunk(2, -1)

        # mha
        x_ln = modulation(self.mha_ln(x), s1, b1)
        x = x + self.pom(x_ln) * (1 + g1)

        #ffw
        x_ln = modulation(self.ffw_ln(x), s2, b2)
        x = x + self.ffw(x_ln)*(1+g2)

        return x
\end{lstlisting}

For text-to-video, we add a second PoM module that gathers information from the text.
\begin{lstlisting}[language=Python, caption=Pytorch module for the video diffusion block.]
class TextVideoDiPBlock(nn.Module):
    def __init__(self, dim: int, order: int, order_expand: int, ffw_expand: int):
        super().__init__()
        self.dim = dim
        self.order = order
        self.order_expand = order_expand
        self.ffw_expand = ffw_expand

        self.mha_ln = nn.LayerNorm(dim, elementwise_affine=False, eps=1e-6)
        self.x_mha_ln = nn.LayerNorm(dim, elementwise_affine=False, eps=1e-6)
        self.c_mha_ln = nn.LayerNorm(dim, elementwise_affine=False, eps=1e-6)
        self.pom = PoM(dim, order=order, order_expand=order_expand, bias=True)
        self.c_pom = PoM(dim, order=order, order_expand=order_expand, bias=True)
        self.ffw_ln = nn.LayerNorm(dim, elementwise_affine=False, eps=1e-6)
        self.ffw = nn.Sequential(nn.Linear(dim, ffw_expand * dim, bias=True),
                                 nn.GELU(),
                                 nn.Linear(ffw_expand * dim, dim, bias=True))
        self.cond_mlp = nn.Sequential(
                                 nn.SiLU(),
                                 nn.Linear(dim, 8 * dim, bias=True))
        self.gate_mlp = nn.Sequential(
                                 nn.SiLU(),
                                 nn.Linear(dim, 3 * dim, bias=True))


    def forward(self, x, t, c, mask, temporal_mask=None):
        sx, bx, sc, bc, s1, b1, s2, b2 = self.cond_mlp(t).chunk(8, -1)
        gc, g1, g2 = self.gate_mlp(t).chunk(3, -1)

        # ca
        x_ln = modulation(self.x_mha_ln(x), sx, bx)
        c_ln = modulation(self.c_mha_ln(c), sc, bc)
        x = x + self.c_pom(x_ln, c_ln, mask) * (1 + gc)

        # sa
        x_ln = modulation(self.mha_ln(x), s1, b1)
        x = x + self.pom(x_ln, mask=temporal_mask) * (1 + g1)

        #ffw
        x_ln = modulation(self.ffw_ln(x), s2, b2)
        x = x + self.ffw(x_ln)*(1+g2)

        return x
\end{lstlisting}

\section{Condition adherence}
In this section we study the trade-off between image quality as measured with FID and condition adherence as measured with Inception Score (IS) by varying the weight $\omega$ of the classifier-free guidance (CFG).
We show the results for a model of size L2 trained with the diffusion loss $\mathcal{L}_\text{D}$ on Gigure~\ref{fig:fidis}.
Inference is performed with 250 steps of DDIM sampling.
As we can see, the model is perfectly able to balance FID and IS, leading to a typical 'U' curve where CFG improves both FID and IS at first, but then improvements of IS comes at the cost of FID.
This is typical of mode collapse with the model generating low diversity but high quality images, similarly to what is observed with attention-based models.

\section{Proof of Lemma 3}
    We first need to show that set with different entries are mapped to different vectors. We first separate $\PoM$ into its two components:
    \begin{align}
        s(X) = \sigma(W_s X)\\
        H_k(X) = \left[ h(W_1 X); \dots; \prod_m h(W_m X)\right] \bm{11}^\top\\
        \PoM(X) = W_o (s(X) \circ H_k(X))
    \end{align}

    Assuming $\ker(W_o) = \empty$, and noting that $H_k(X)$ is the same for every column, we just have to show that $s(X)$ has different columns. This is easily achieved by having $\ker(W_s) = \empty$ since $\sigma$ is injective and the composition of injective functions is itself injective.

\begin{figure}
    \centering
    \begin{tikzpicture}
\begin{axis}[
    width=0.89\linewidth,
    xlabel=Inception Score,
    ylabel=FID,
    grid=both,
    grid style={line width=.1pt, draw=gray!10},
    major grid style={line width=.2pt, draw=gray!30},
    xmin=95, xmax=310,
    ymin=9, ymax=21
            ]
\addplot[solid,mark=o,blue!70,style={line width=1.5pt}] plot coordinates {
    (101.91,17.96)
    (130.98,13.42)
    (157.00,11.24)
    (178.12,10.36)
    (198.12,10.26)
    (217.87,10.61)
    (237.76,11.23)
    (276.87,14.57)
    (293.25,16.87)
    (304.40,18.56)
};
\end{axis}
    \end{tikzpicture}
    \caption{\textbf{Image quality versus condition adherence trade-off}. FID/IS curve for the L2 model with 250 DDIM sampling steps. Values are computed on 10k images against the validation set of ImageNet.}
    \label{fig:fidis}
\end{figure}
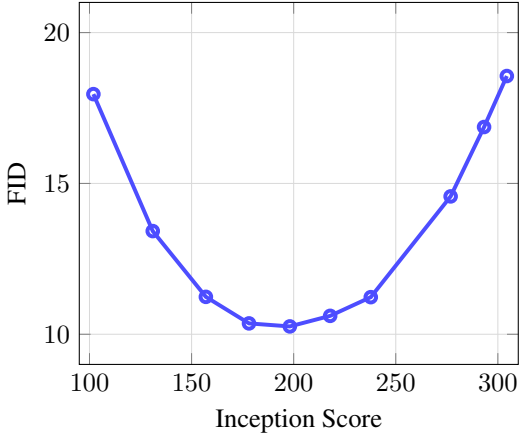

    Second, we have to show that sets that differ by at least one element are mapped to all different entries. To simplify notations, we will consider the special case where all matrices are the identity or an identity block positioned such as to perform submatrix selection. All the matrices can thus be removed from the formula. A similar argument can be made for matrices that are full rank as they preserve injectivity. We will also consider linear activations everywhere, which can be made as close as one wish by partitioning the image of the activation function and performing piecewise linear approximation.

    With this simplified version of $\PoM$, we have to show that for 2 sets $X,X'$ differing by at least one element (i.e., $\exists x' \in X', \forall x\in X, x \neq x'$), then there exist $k$ such that
    \begin{align}
        \forall x\in X, x'\in X', x\sum_{x_i\in X} x_i^k \neq x\sum_{x_i\in X} x_i^k.
    \end{align}

    Consider the functions $P(t)$ and $P'(t)$ defined as follows:
    \begin{align}
        P(t)=\sum_{x_i in X} x_i^t\\
        P'(t)=\sum_{x_i in X'} x_i^t
    \end{align}

    Since $X$ and $X'$ differ by at least one element, there exists at least one $x_i \in X$ such that $x_i \neq x_i', \forall x_i' \in X'$. This implies that the functions $P(t)$ and $P'(t)$ are not identical since are sums of exponentials with different bases.

    Since $P(t)$ and $P'(t)$ are different functions, there must exist some $k$ for which $P(k) \neq P'(k)$. In other words, there exists a $k$ such that:
    \begin{align}
        \sum_{x_i\in X} x_i^k \neq \sum_{x_i' \in X'} x_i'^k
    \end{align}
    
    For this $k$, let us denote $S_k = \sum_{x_i\in X}x_i^k$. We need to show that $xS_k \neq x'S_k'$ for all $x\in X$ and $x'\in X'$.
    Assume for the sake of contradiction that there exist $x\in X$ and $x'\in X'$ such that $xS_k = x'S_k'$. This implies:
    \begin{align}
        x \sum_{x_i \in X} x_i^k = x' \sum_{x_i' \in X'} x_i'^k
    \end{align}

    Rearranging, we get:
    \begin{align}
        \frac{x}{x'} =  \frac{\sum_{x_i' \in X'}x_i'^k}{\sum_{x_i\in X}x_i^k}
    \end{align}

    Since $S_k\neq S_k'$, the right-hand side is not equal to 1. However, for this equality to hold for all $x\in X$ and $x'\in X'$, the ratio $x/x'$ would need to be constant for all pairs $(x,x')$, which is not possible given that $X$ and $X'$ differ by at least one element.

    Therefore, there exists a $k$ such that $x S_k\neq x'S_k'$ for all $x\in X$ and $x'\in X'$.

\section{Uncurated examples}
In the following pages, we show randomly selected samples with obtained after 250 steps of DDIM sampling with the XL/2 model trained with the diffusion loss $\mathcal{L}_\text{D}$.

\newcommand{\addimages}[1]{%
\readarraysepchar{,}
\readdef{#1/images.txt}\imageList
\readarray*\imageList\imageArray[-,1]
\begin{tikzpicture}
    \pgfmathsetmacro{\scaleFactor}{\textwidth / (5 * 256)}

    \foreach \i in {1,...,30} {
        \pgfmathsetmacro{\row}{int(floor((\i - 1) / 5))} 
        \pgfmathsetmacro{\col}{int(mod(\i - 1, 5))} 

        \edef\image{\imageArray[\i,1]}
        \pgfmathsetmacro{\xcoord}{\col * 256 * \scaleFactor}
        \pgfmathsetmacro{\ycoord}{-\row * 256 * \scaleFactor}

        \typeout{Coordinates: (\xcoord, \ycoord)}
        \pgfmathsetmacro{\imgsize}{256 * \scaleFactor}

        \node[inner sep=0pt] at (\xcoord pt, \ycoord pt) {
            \includegraphics[width=\imgsize pt, height=\imgsize pt]{#1/\image}
        };
    }
\end{tikzpicture}
}

\begin{figure*}
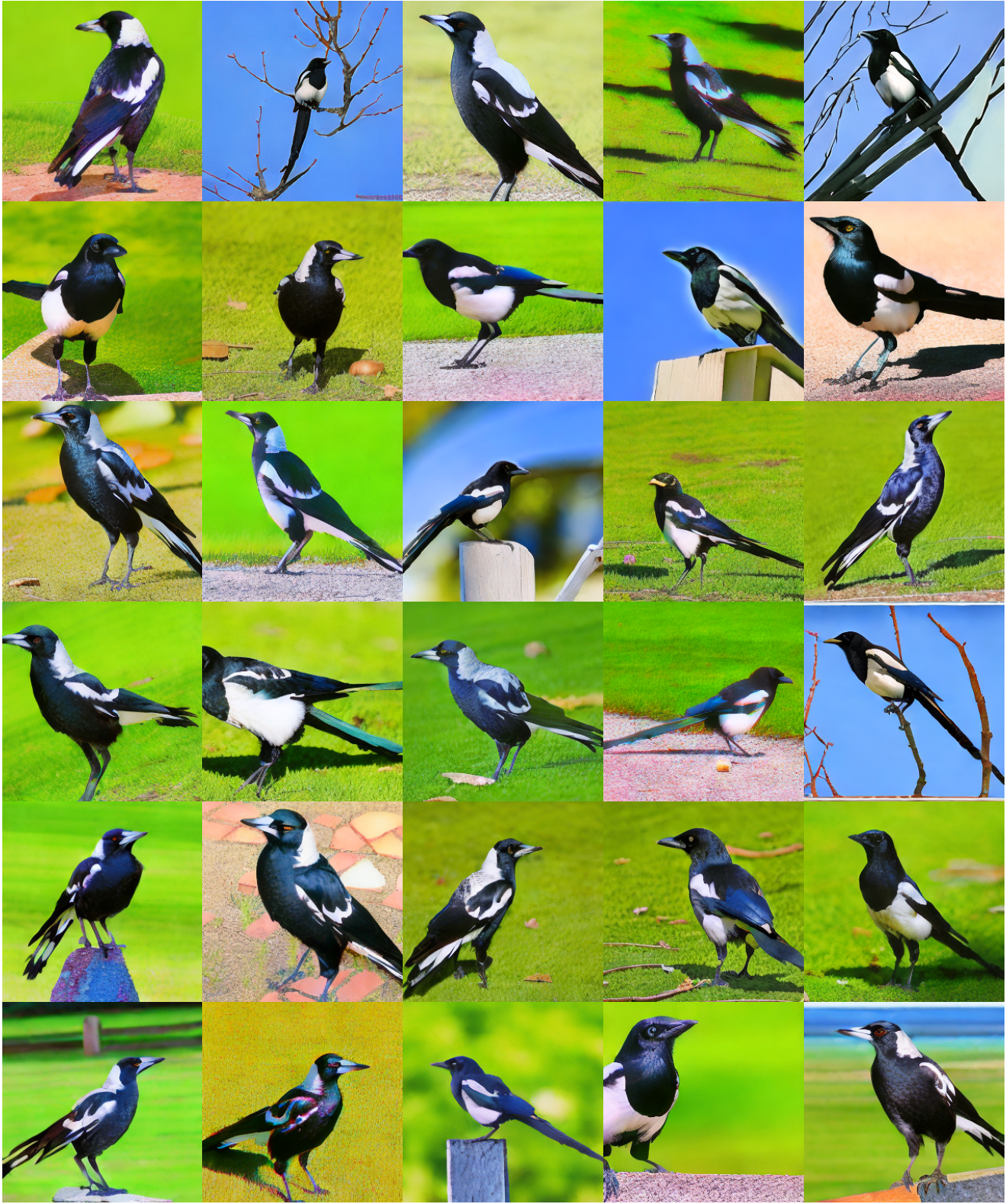

    \begin{center}
    \addimages{images/18/}
    \end{center}
    \caption{Uncurated 256² images for the class \emph{magpie} (18).}
    \label{fig:magpie}
\end{figure*}

\begin{figure*}
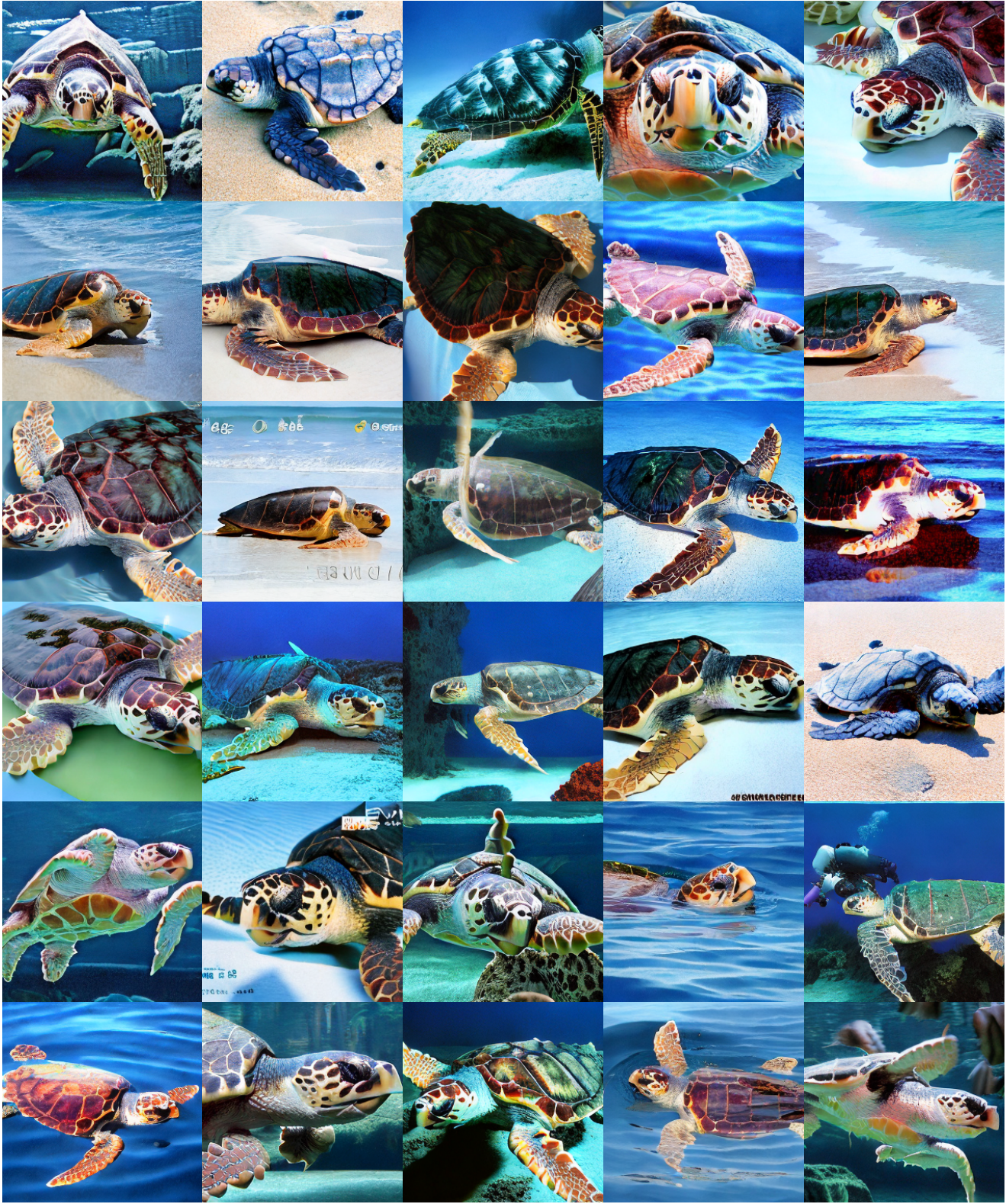

    \begin{center}
    \addimages{images/33/}
    \end{center}
    \caption{Uncurated 256² images for the class \emph{loggerhead, loggerhead turtle, Caretta caretta} (33).}
    \label{fig:turtle}
\end{figure*}

\begin{figure*}
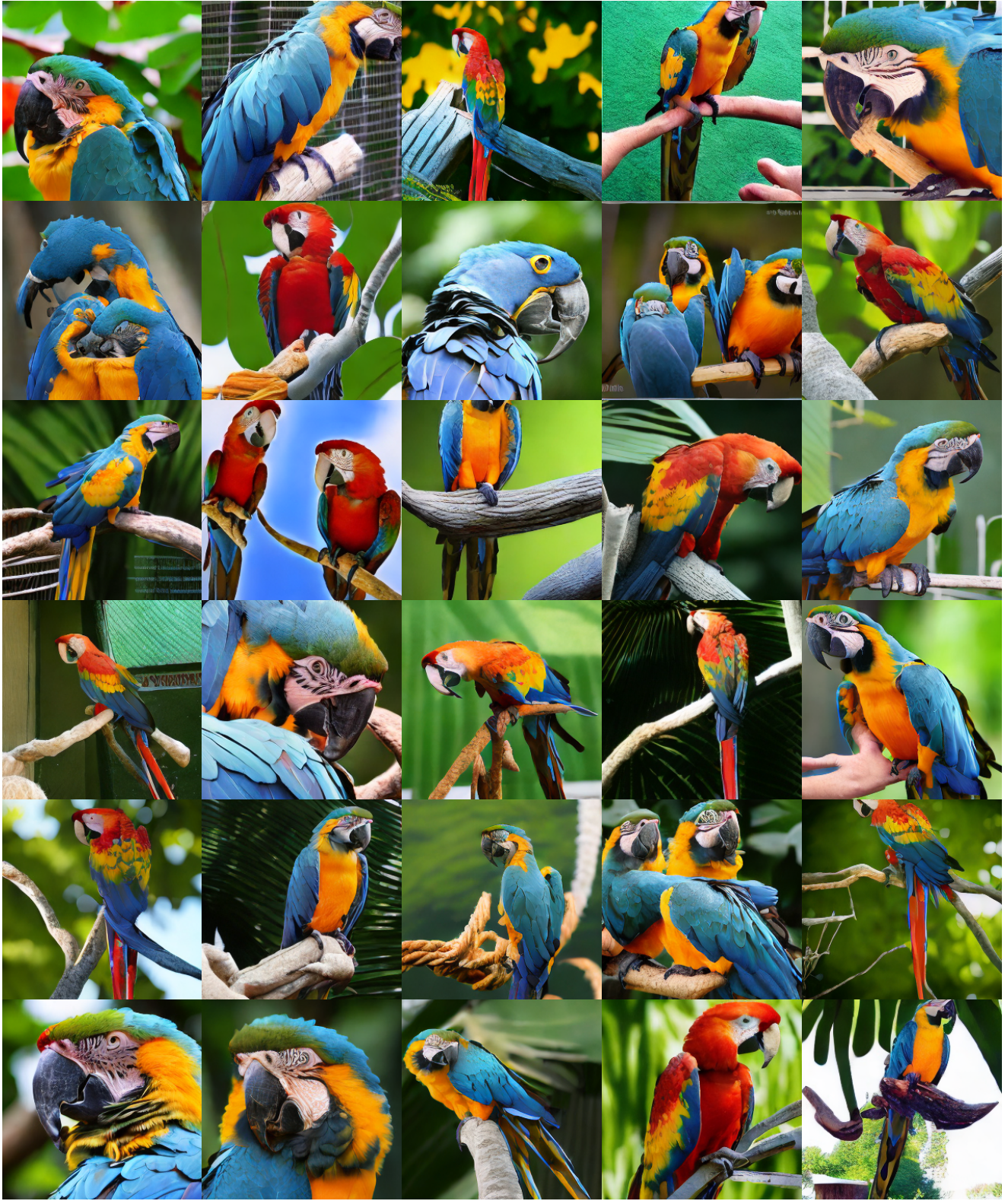

    \begin{center}
    \addimages{images/88/}
    \end{center}
    \caption{Uncurated 256² images for the class \emph{macaw} (88).}
    \label{fig:macaw}
\end{figure*}

\begin{figure*}
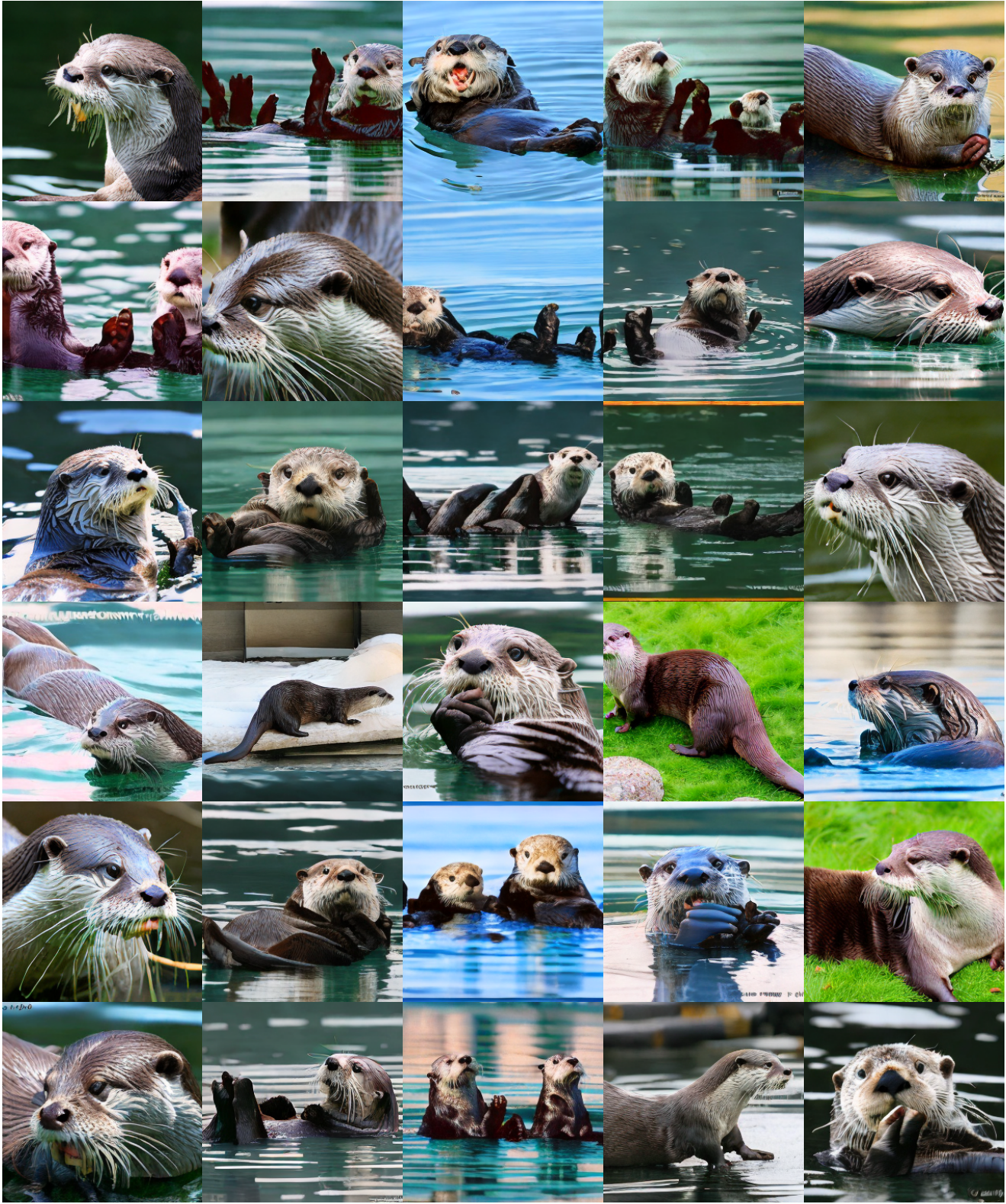

    \begin{center}
    \addimages{images/360/}
    \end{center}
    \caption{Uncurated 256² images for the class \emph{otter} (360).}
    \label{fig:otter}
\end{figure*}

\begin{figure*}
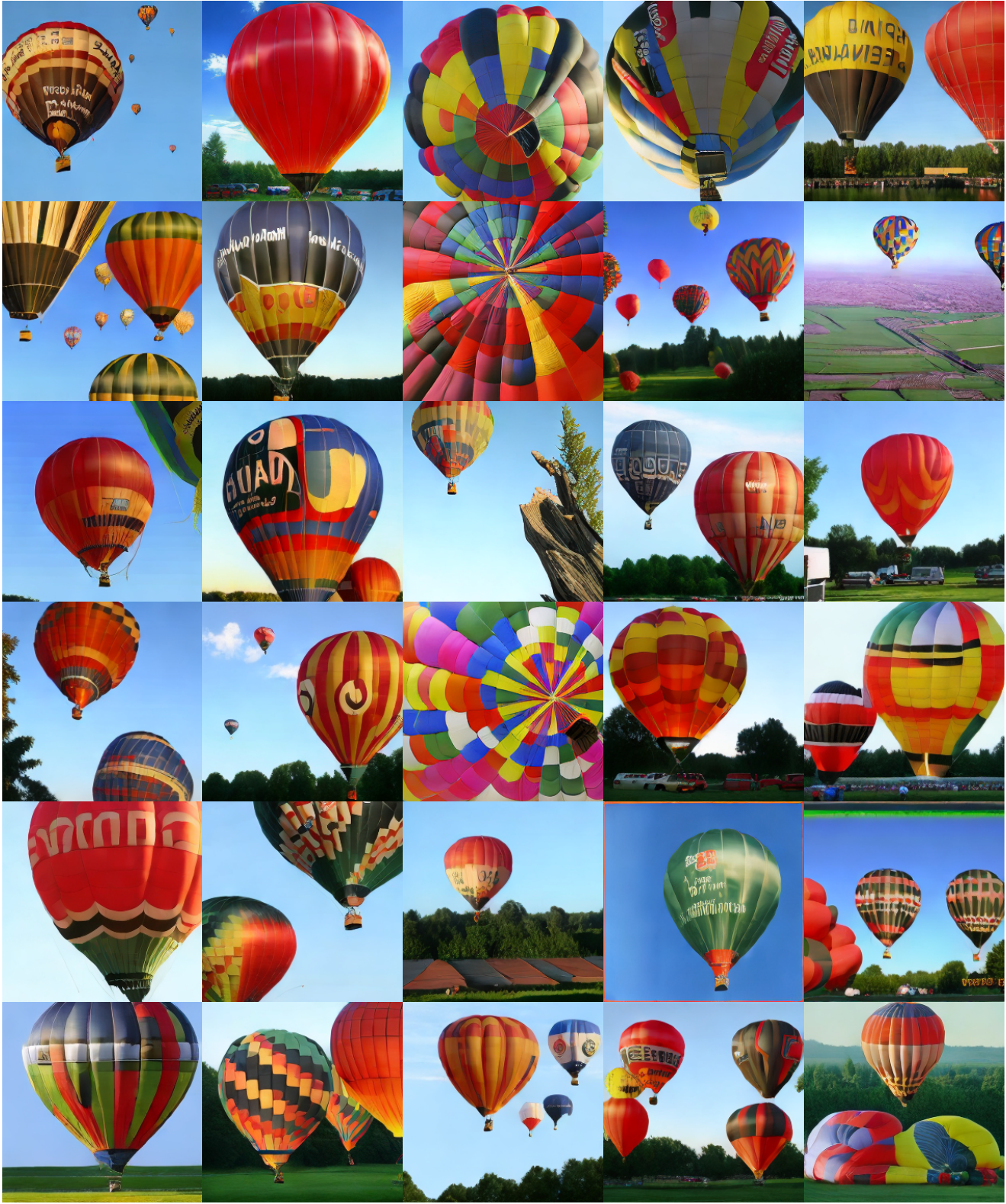

    \begin{center}
    \addimages{images/417/}
    \end{center}
    \caption{Uncurated 256² images for the class \emph{balloon} (417).}
    \label{fig:balloon}
\end{figure*}

\begin{figure*}
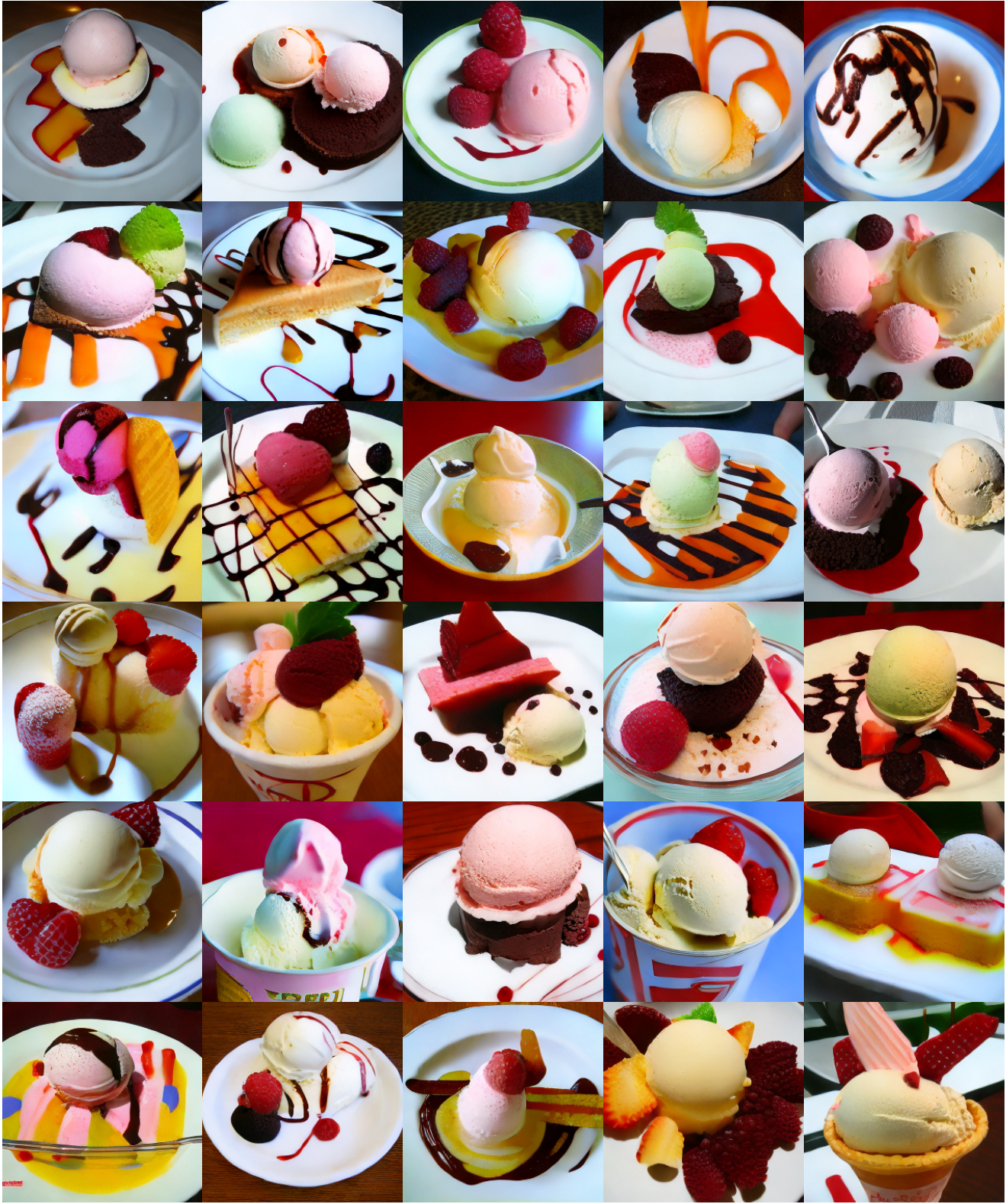

    \begin{center}
    \addimages{images/928/}
    \end{center}
    \caption{Uncurated 256² images for the class \emph{ice cream, icecream} (928).}
    \label{fig:icecream}
\end{figure*}

\begin{figure*}
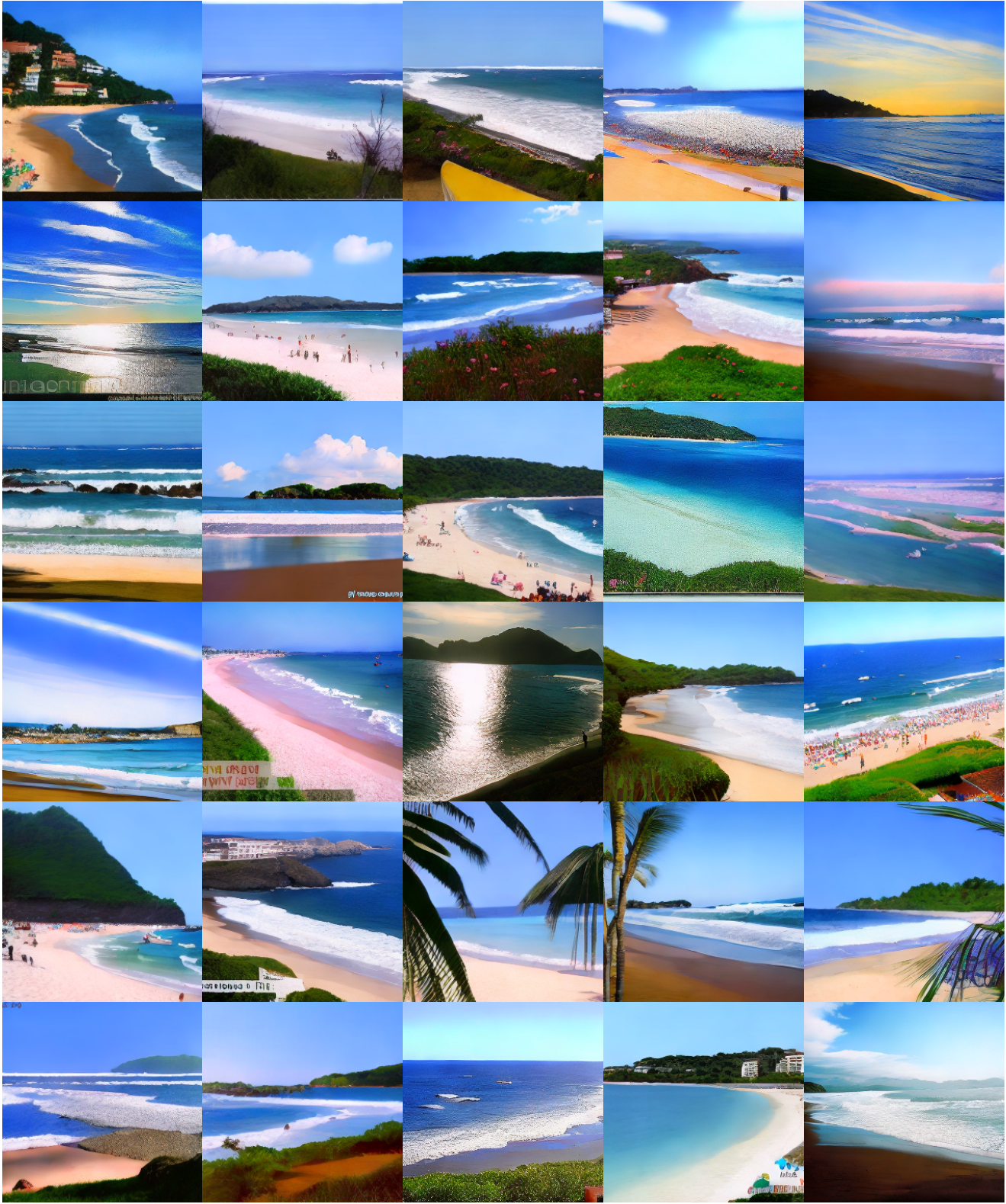

    \begin{center}
    \addimages{images/978/}
    \end{center}
    \caption{Uncurated 256² images for the class \emph{seashore, coast, seacoast, sea-coast} (978).}
    \label{fig:seashore}
\end{figure*}

\begin{figure*}
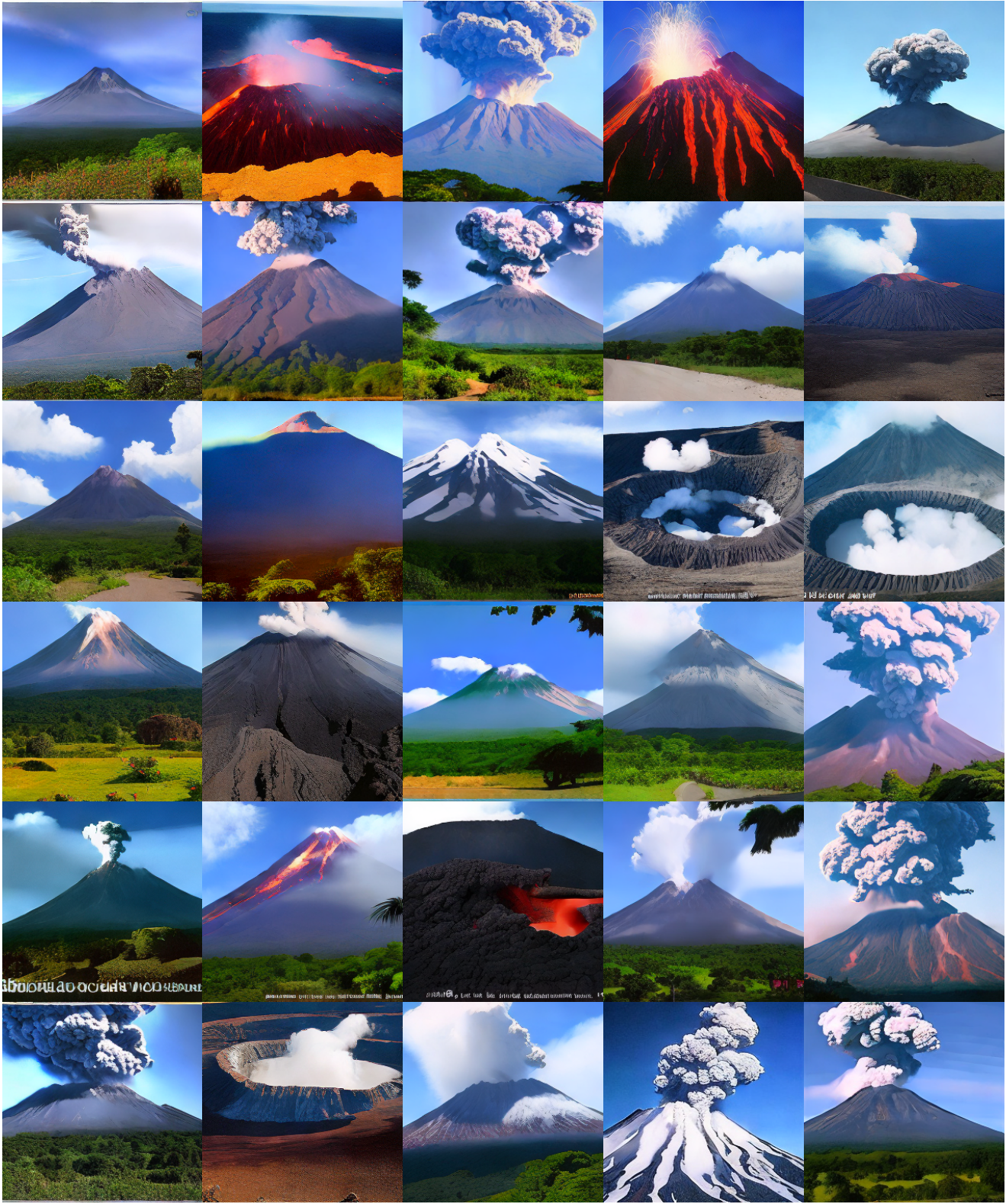

    \begin{center}
    \addimages{images/980/}
    \end{center}
    \caption{Uncurated 256² images for the class \emph{volcano} (980).}
    \label{fig:volcano}
\end{figure*}

\end{document}